\newtheorem{theorem}{Theorem}[section]
\newtheorem{lemma}[theorem]{Lemma}
\newtheorem{definition}[theorem]{Definition}
\newtheorem{corollary}[theorem]{Corollary}
\newenvironment{RETHM}[2]{\it \trivlist \item[\hskip \labelsep{\bf #1 \ref{#2}}]}{\endtrivlist}
\newcommand{\rethm}[1]{\begin{RETHM}{Theorem}{#1}}
\newcommand{\repro}[1]{\begin{RETHM}{Proposition}{#1}}
\newcommand{\relem}[1]{\begin{RETHM}{Lemma}{#1}}
\newcommand{\recor}[1]{\begin{RETHM}{Corollary}{#1}}
\newcommand{\erethm}{\end{RETHM}}
\newcommand{\erepro}{\end{RETHM}}
\newcommand{\erelem}{\end{RETHM}}
\newcommand{\erecor}{\end{RETHM}}
\newcommand{\fp}{$\mbox{FP}^{{\rm NP}[\log{n}]}$}
\newcommand{\fpsigma}{$\mbox{FP}^{\Sigma_2^P[\log{n}]}$}
\newcommand{\fpsigmad}{$\mbox{FP}^{\D_2[\log{n}]}$}
\newcommand{\fpa}{$\mbox{FP}^{{\rm A}[\log{n}]}$}
\newcommand{\fpapar}{$\mbox{FP}^{\rm A}_{||}$}
\newcommand{\fpsigmapar}{$\mbox{FP}^{\Sigma_2^P}_{||}$}
\newcommand{\fpsigmapard}{$\mbox{FP}^{\D_2}_{||}$}
\newcommand{\fpsigmapars}{\fpsigmapar\ }
\newcommand{\fppar}{$\mbox{FP}^{{\rm NP}}_{||}$}
\newcommand{\pair}[1]{\langle #1  \rangle}
\def\squarebox#1{\hbox to #1{\hfill\vbox to #1{\vfill}}}
\newcommand{\qed}{\hspace*{\fill}
            \vbox{\hrule\hbox{\vrule\squarebox{.667em}\vrule}\hrule}\smallskip}
\newenvironment{proof}{\begin{trivlist}
\item[\hspace{\labelsep}{\bf\noindent Proof: }]
}{\qed\end{trivlist}}
\renewenvironment{proof}{\begin{trivlist}
\item[\hspace{\labelsep}{\bf\noindent Proof: }]
}{\qed\end{trivlist}}
\newenvironment{proofout}{\begin{trivlist}
\item[\hspace{\labelsep}{\bf\noindent Proof outline: }]
}{\qed\end{trivlist}}
\newtheorem{xmpl}[theorem]{Example}
\newenvironment{example}{\begin{xmpl}\rm}{\end{xmpl}}
\newtheorem{rmark}[theorem]{Remark}
\newcommand{\U}{{\cal U}}
\newcommand{\D}{{D}}
\newcommand{\cS}{{\cal S}}
\newcommand{\Lan}{L}
\newcommand{\zug}[1]{\langle #1  \rangle}
\newcommand{\stam}[1]{}
\newcommand{\SH}{\mbox{{\it SH}}}
\newcommand{\BH}{\mbox{{\it BH}}}
\newcommand{\BT}{\mbox{{\it BT}}}
\newcommand{\BS}{\mbox{{\it BS}}}
\newcommand{\ST}{\mbox{{\it ST}}}
\newcommand{\bd}{\begin{definition}}
\newcommand{\ed}{\end{definition}}
\newcommand{\be}{\begin{enumerate}}
\newcommand{\bi}{\begin{itemize}}
\newcommand{\ee}{\end{enumerate}}
\newcommand{\ei}{\end{itemize}}
\newcommand{\T}{\mbox{\sc \bf true}\xspace}
\newcommand{\Lcbin}{L_{\mbox{\small cause}}^B}
\newcommand{\Lc}{L_{\mbox{\small cause}}}
\newcommand{\Lsingle}{L_{\mbox{\small AC2}}}
\newcommand{\Lmin}{L_{\mbox{\small AC3}}}
\newcommand{\Lsinglebin}{L_{\mbox{\small AC2}}^B}
\newcommand{\Lminbin}{L_{\mbox{\small AC3}}^B}
\newcommand{\Lcbino}{L_{\mbox{\small cause}}^{B,1}}
\newcommand{\Lco}{L_{\mbox{\small cause}}^1}
\newcommand{\Lsingleo}{L_{\mbox{\small AC2}}^1}
\newcommand{\Lmino}{L_{\mbox{\small AC3}}^1}
\newcommand{\Lsinglebino}{L_{\mbox{\small AC2}}^{B,1}}
\newcommand{\Lminbino}{L_{\mbox{\small AC3}}^{B,1}}
\newcommand{\cF}{{\cal F}}
\newcommand{\V}{{\cal V}}
\newcommand{\R}{{\cal R}}
\newcommand{\DP}{D^P}
\renewcommand{\citeyear}{\shortcite}
\newcommand{\commentout}[1]{}
\newcommand{\fullv}[1]{#1}
\newcommand{\shortv}[1]{\commentout{#1}}
\renewcommand{\phi}{\varphi}
\newcommand{\dr}{\mbox{{\em dr}}}
\newcommand{\db}{\mbox{{\em db}}}
\newcommand{\K}{{\cal K}}
\newcommand{\NP}{\mbox{{\it NP}}}
\title{The Computational Complexity of Structure-Based Causality}
\author{Gadi Aleksandrowicz \\
IBM Research Lab, \\ Haifa, Israel \\
gadia@il.ibm.com \\
\And
Hana Chockler \\
Department of Informatics,\\
King's College, \\ London, UK\\
hana.chockler@kcl.ac.uk \\
\And
Joseph Y. Halpern\\
Computer Science Department, \\
Cornell University,\\
Ithaca, NY, U.S.A. \\
halpern@cs.cornell.edu \\
\And Alexander Ivrii \\
IBM Research Lab, \\ Haifa, Israel\\
alexi@il.ibm.com}
\begin{document}
\maketitle

\begin{abstract}
Halpern and Pearl 
introduced a definition of
actual causality; Eiter and Lukasiewicz 
showed that
computing whether $X=x$ is a cause of $Y=y$ is $\NP$-complete in binary
models (where all variables can take on only two values) and
$\Sigma_2^P$-complete in general models.  In the final version of their
paper, 
Halpern and Pearl slightly modified the definition of
actual cause, in order to deal with problems pointed by Hopkins and
Pearl.
As we show, this modification has a
nontrivial impact on the complexity of computing actual cause.
To characterize the complexity, a new family $D_k^P$, $k= 1, 2, 3,
\ldots$, of complexity classes is introduced, which generalizes the class
$\DP$ introduced by 
Papadimitriou and Yannakakis
($\DP$ is just $D_1^P$).
We show that the complexity of computing causality 
under the updated definition is $D_2^P$-complete.  

Chockler and Halpern
extended the 
definition of causality by introducing notions of \emph{responsibility}
and \emph{blame}. 
The complexity of 
determining the 
degree of responsibility and blame using the original definition of
causality
was completely characterized. 
Again, we show that changing the definition of causality
affects the complexity, and completely characterize it
using the updated definition.
\end{abstract}

\section{Introduction}
There have been many attempts to define {\em causality\/} going back to
Hume \citeyear{Hume39}, and continuing to the present (see, for example,
\cite{Collins03,pearl:2k} for some recent work). 
The standard definitions of causality are based on
counterfactual reasoning. In this paper, we focus on one such definition,
due to Halpern and Pearl, that has proved quite influential recently.

The definition was originally introduced in 2001 \cite{HPearl01a}, but
then modified in the final journal version \cite{HP01b} to deal with
problems pointed out by Hopkins and Pearl \citeyear{HopkinsP02}.  
(For ease of reference, we call these definitions ``the original HP
definition'' and ``the updated HP definition'' in the sequel.)
In general, what can be a cause in both the original HP definition and
the updated definition is a conjunction of the form $X_1 \gets x_1 \land
\ldots \land X_k \gets x_k$, abbreviated $\vec{X} \gets \vec{x}$; what
is caused can be an arbitrary Boolean 
combination $\phi$ of formulas of the form $Y = y$.  This should be
thought of as saying that setting $X_1$ to $x_1$ and~$\ldots$ and setting
$X_k$ to $x_k$ results in $\phi$ being true.  As shown by Eiter and
Lukasiewicz \citeyear{EL01} and Hopkins \citeyear{Hopkins01}, under the
original HP definition, we can always take causes to be single
conjuncts.  However, as shown by Halpern \citeyear{Hal39}, this is not
the case for the updated HP definition.

Using the fact that causes can be taken to be single conjuncts,
Eiter and Lukasiewicz\citeyear{EL01} showed that deciding causality
(that is, deciding whether $X=x$ is a cause of $\phi$) 
is $\NP$-complete in binary
models (where all variables can take on only two values) and
$\Sigma_2^P$-complete in general models.  As we show here, this is no
longer the case for the updated HP definition.  Indeed, we completely
characterize the complexity of causality for the updated HP definition.
To do so, we introduce a new family of complexity classes that may be of
independent 
interest.  Papadimitriou and Yannakakis \citeyear{PY84} introduced the
complexity class $\DP$, which consists of all languages $\Lan_3$ such that
there exists a language $\Lan_1$ in $\NP$ and a language $\Lan_2$ in
co-$\NP$ such 
that $\Lan_3 = L_1 \cap 
\Lan_2$.
 We generalize this by defining $D_k^P$ to consist of all
languages $\Lan_3$ such that there exists a language $\Lan_1 \in \Sigma_k^P$
and a language $\Lan_2 \in \Pi_k^P$ such  that $\Lan_3 = \Lan_1 \cap
\Lan_2$. 

Since $\Sigma_1^P$ is $\NP$ and $\Pi_1^P$ is co-$\NP$,
$D_1^P$ is Papadimitriou and Yannakakis's $\DP$.  We then show that 
deciding causality under the updated HP definition is $D_2^P$ complete.
Papadimitriou and Yannakakis \citeyear{PY84} showed that a number of
problems of interest were $\DP$ complete, both for binary and general
causal models.  To the best of our
knowledge, this is the first time that a natural problem has been shown
to be complete for $D_2^P$.

Although, in general, causes may not be single conjuncts, as observed by
Halpern \citeyear{Hal39}, in many cases (in particular, in all the
standard examples studied in the literature), they are.  In an effort to
understand the extent to which the difficulty in deciding causality
stems from the fact that causes may require several conjuncts, we
consider what we call the \emph{singleton cause} problem; that is, the
problem of deciding if $X=x$ is a cause of $\phi$ (i.e., where there is
only a single conjunct in the cause).  We show that the singleton cause
problem is simpler than the general causality problem (unless the
polynomial hierarchy collapses): it is $\Sigma_2^P$ complete for both
binary and general causal models.  
\fullv{ Thus, if we restrict to singleton
causes (which we can do without loss of generality under the original HP
definition), the complexity of deciding causality in general models is
the same under the 
original and the updated HP definition, but in binary models, it is still
simpler under the original HP definition.
}
\shortv{
Thus, if we restrict to singleton
causes, the complexity of deciding causality in general models is
the same under the 
original and the updated HP definition, but in binary models, it is still
simpler under the original HP definition.
}

Causality is a ``0--1'' concept; $\vec{X} = \vec{x}$ is either a cause of
$\phi$ or it is not.  Now consider two voting scenarios: in the first, Mr.~G
beats Mr.~B by a vote of 11--0.  In the second, Mr.~G beats Mr.~B by a
vote of 6--5.  According to both the original and the updated HP
definition, all the people who voted for Mr. G are causes of him
winning.  While this does not seem so unreasonable, it does not capture
the intuition that each voter for Mr. G is more critical to the
victory in the case of the 6--5 vote than in the case of the 11--0 vote.
The notion of \emph{degree of responsibility}, introduced by Chockler and
Halpern \citeyear{CH04}, does so.  The idea is that the degree of
responsibility of $X=x$ for $\phi$ is $1/(k+1)$, where $k$ is the least
number of changes that have to be made in order to make $X=x$ critical.
In the case of the 6--5 vote, no changes have to be made to make each
voter for Mr.~G critical for Mr.~G's victory; if he had not voted for
Mr.~G, Mr.~G would not have won.  Thus, each voter has degree of
responsibility 1 (i.e., $k=0$).  On the other hand, in the case of the
11--0 vote, for a particular voter to be critical, five other voters
have to switch their votes; thus, $k=5$, and each voter's degree of
responsibility is $1/6$.  This notion of degree of responsibility has
been shown to capture (at a qualitative level) the way people allocate
responsibility \cite{GL10,LGZ13}.  

Chockler and Halpern further extended the notion of degree of
responsibility to \emph{degree of blame}.  Formally, the degree of blame
is the expected degree of responsibility.  This is perhaps best
understood by considering a firing squad with ten excellent marksmen.  
Only one of them has live bullets in his rifle; the rest have blanks.  
The marksmen do not know which of them has the live bullets.  The
marksmen shoot at the prisoner and he dies.  The only marksman that is
the cause of the prisoner's death is the one with the live bullets.
That marksman has degree of responsibility 1 for the death; all the rest
have degree of responsibility 0.  However, each of the marksmen has
degree of blame $1/10$.%
%
The complexity of
determining the degree of responsibility and blame 
using the original
definition of causality 
was completely characterized \cite{CH04,CHK}.
Again, we show that changing the definition of
causality affects the complexity, and completely characterize the
complexity of determining the degree of responsibility and blame with
the updated definition.

The rest of this paper is organized as follows.  In
Section~\ref{sec:definitions}, we review the relevant definitions of
causality.  In Section~\ref{sec:complexity}, we 
briefly review the
relevant definitions from complexity theory and define the complexity
classes $D_k^P$.  In Section~\ref{sec-cause} we prove our results on
complexity of causality.%
\shortv{\footnote{Missing proof details can be found at http://www.cs.cornell.edu/home/halpern/papers/newcause.pdf.}
}
\fullv{Some proofs are deferred to the appendix.}

\section{Causal Models and Causality: A Review}\label{sec:definitions}

In this section, we review the details of 
Halpern and Pearl's definition of causal models and causality,
describing both the original definition and the updated definition.
This material is largely taken from \cite{HP01b}, to which we refer the
reader for further details.  

\subsection{Causal models}

A {\em signature\/} is a tuple $\cS = \zug{\U,\V,\R}$, 
where $\U$ is a finite set
of {\em exogenous\/} variables, $\V$ is a 
finite
set of {\em endogenous\/}
variables,  
and $\R$ associates with every variable  
$Y \in \U \cup \V$ a 
finite
nonempty set $\R(Y)$ of possible values for $Y$.
Intuitively, the  exogenous variables are ones whose values are
determined by factors outside the model, while the endogenous variables
are ones whose values are ultimately determined by the exogenous
variables.
A {\em causal model\/} over signature $\cS$ is a tuple
$M = \zug{\cS,\cF}$, where $\cF$ associates with every endogenous variable
$X \in \V$ a function $F_X$ such that 
$F_X: (\times_{U \in \U} \R(U) \times (\times_{Y \in \V \setminus \{ X \}}
\R(Y))) \rightarrow \R(X)$. That is, $F_X$ describes how the value of the
endogenous variable $X$ is determined by
the values of all other variables in $\U \cup \V$. 
If $\R(Y)$ contains only two values for each $Y \in \U \cup \V$, then 
we say that $M$ is a 
{\em binary causal model}.

We can describe (some salient features of) a causal model $M$ using a
{\em causal network}. A causal network
is a graph
with nodes corresponding to the random variables in $\V$ and an edge
from a node labeled $X$ to one labeled $Y$ if $F_Y$ depends on the value
of $X$.
Intuitively, variables can have a causal effect only on their
descendants in the causal network; if $Y$ is not a descendant of $X$,
then a change in the value of $X$ has no affect on the value of $Y$.
For ease of exposition, 
we restrict attention to what are called {\em
recursive\/} models. These are ones whose associated causal network
is a directed acyclic graph (that
is, a graph that has no  cycle of edges).
Actually, it suffices for our purposes that, 
for each setting $\vec{u}$
for the variables in $\U$, there is no cycle among the edges of the causal
network.   
We call a setting $\vec{u}$ for the variables in $\U$ a {\em context}.
It should be clear that if $M$ is a recursive causal model,
then there is always a
unique solution to the equations in $M$, given a context.

The equations determined by $\{F_X: X \in \V\}$ can be thought of as
representing  processes (or mechanisms) by which values are assigned to
variables.  For example, if $F_X(Y,Z,U) = Y+U$ (which we usually write 
as $X=Y+U$), then if $Y = 3$ and $U = 2$, then $X=5$,
regardless of how $Z$ is set.
This equation also gives counterfactual information. It says that,
in the context $U = 4$, if $Y$ were $4$, then $X$ would
be $8$, regardless of what value $X$ and $Z$ 
actually take in the real world.  That is, if $U=4$ and the value of $Y$
were forced to be 4 (regardless of its actual value), then the value of
$X$ would be 8.

While the equations for a given problem are typically obvious, the
choice of variables may not be.  Consider the following example
(due to Hall \citeyear{Hall98}), showing that the choice of variables influences the causal analysis.
Suppose that
Suzy and Billy both pick up rocks and throw them at  a bottle.
Suzy's rock gets there first, shattering the
bottle.  Since both throws are perfectly accurate, Billy's would have
shattered the bottle had Suzy not thrown.

In this case, a naive model might have
an exogenous variable $U$ that encapsulates whatever background factors
cause Suzy and Billy to decide to throw the rock
(the details of $U$
do not matter, since we are interested only in the context where $U$'s
value is such that both Suzy and Billy throw), a variable $\ST$ for Suzy
throws ($\ST = 1$ if Suzy throws, and $ST = 0$ if she doesn't), a
variable $\BT$ for Billy throws, and a variable $\BS$ for bottle shatters.
In the naive model, whose graph is given in Figure~\ref{fig0}, 
$BS$ is 1 if one of $\ST$ and $\BT$ is 1.
%
\begin{figure}[h]
\begin{center}
\setlength{\unitlength}{.07in}
\begin{picture}(8,8)
\put(3,0){\circle*{.2}}
\put(3,8){\circle*{.2}}
\put(0,4){\circle*{.2}}
\put(6,4){\circle*{.2}}
\put(3,8){\vector(3,-4){3}}
\put(3,8){\vector(-3,-4){3}}
\put(0,4){\vector(3,-4){3}}
\put(6,4){\vector(-3,-4){3}}
\put(3.7,-.2){$\BS$}
\put(-2,3.8){$\ST$}
\put(6.15,3.8){$\BT$}
\put(3.4,7.8){$U$}
\end{picture}
\end{center}
\caption{A naive model for the rock-throwing example.}\label{fig0}
\end{figure}
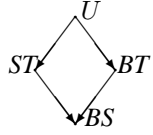

This causal model does not distinguish between Suzy and Billy's rocks 
hitting the bottle simultaneously and Suzy's rock hitting first.
A more sophisticated model might also include variables $\SH$ and $\BH$,
for Suzy's rock hits the bottle and Billy's rock hits the bottle.  
Clearly $\BS$ is 1 iff one of $\SH$ and $\BH$ is 1.  However, now,
$\SH$ is 1 if $\ST$ is 1, and $\BH = 1$ if $\BT = 1$ and $\SH = 0$.
Thus, Billy's
throw hits if Billy throws {\em and\/} Suzy's rock doesn't hit.
This model is described by the following graph,
where we implicitly assume a context where Suzy throws first, so there
is an edge from $\SH$ to $\BH$, but not one in the other direction
(and omit the exogenous variable).
\begin{figure}[h]
{\begin{center}
\setlength{\unitlength}{.07in}
\begin{picture}(8,9)
\put(3,0){\circle*{.2}}
\put(0,8){\circle*{.2}}
\put(6,8){\circle*{.2}}
\put(0,4){\circle*{.2}}
\put(6,4){\circle*{.2}}
\put(0,8){\vector(0,-1){4}}
\put(6,8){\vector(0,-1){4}}
\put(0,4){\vector(1,0){6}}
\put(0,4){\vector(3,-4){3}}
\put(6,4){\vector(-3,-4){3}}
\put(3.8,-.2){$\BS$}
\put(-2.7,7.8){$\ST$}
\put(6.3,7.8){$\BT$}
\put(-2.7,3.8){$\SH$}
\put(6.3,3.8){$\BH$}
\end{picture}
\end{center}
}
\caption{A better model for the rock-throwing example.}\label{fig1}
\end{figure}
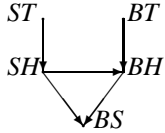

Given a causal model $M = (\cS,\cF)$, a (possibly
empty)  vector
$\vec{X}$ of variables in $\V$, and a vector $\vec{x}$ 
of values for the variables in
$\vec{X}$, we define a new causal model,
denoted $M_{\vec{X} \gets \vec{x}}$,
which is identical to $M$, except that the
equation for the variables $\vec{X}$ in $\cF$ is replaced by $\vec{X} =
\vec{x}$. 
Intuitively, this is the causal model that results when the variables in
$\vec{X}$ are set to $\vec{x}$ by some external action
that affects only the variables in $\vec{X}$
(and overrides the effects of the causal equations).
For example, if $M$ is the more sophisticated model for the
rock-throwing example, 
then $M_{{\it ST} \gets 0}$ is the model where Suzy doesn't throw.

Given a signature $\cS = (\U,\V,\R)$, a formula of the form $X = x$, for
$X \in \V$ and $x \in \R(X)$, is called a {\em primitive event}.   A {\em
basic causal formula\/} 
has the form
$[Y_1 \gets y_1, \ldots, Y_k \gets y_k] \phi$,
where 
\fullv{
\begin{itemize}
\item }
$\phi$ is a Boolean
combination of primitive events;
\fullv{\item}
 $Y_1,\ldots, Y_k$ are distinct variables in $\V$;  and
\fullv{\item }
$y_i \in \R(Y_i)$.
\fullv{\end{itemize}}
Such a formula is abbreviated as $[\vec{Y} \gets \vec{y}]\phi$.
The special
case where $k=0$
is abbreviated as $\phi$.
Intuitively, $[Y_1 \gets y_1, \ldots, Y_k \gets y_k] \phi$ says that
$\phi$ holds in the counterfactual world that would arise if
$Y_i$ is set to $y_i$, for $i = 1,\ldots,k$.
A {\em causal formula\/} is a Boolean combination of basic causal
formulas.

A causal formula $\phi$ is true or false in a causal model, given a
\emph{context}.
We write $(M,\vec{u}) \models \phi$ if
$\phi$ is true in
causal model $M$ given context $\vec{u}$.
$(M,\vec{u}) \models  [\vec{Y} \gets \vec{y}](X = x)$ if 
the variable $X$ has value $x$ 
in the unique (since we are dealing with recursive models) solution
to the equations in
$M_{\vec{Y} \gets \vec{y}}$ in context $\vec{u}$ (i.e., the
unique vector
of values for the exogenous variables that simultaneously satisfies all
equations $F^{\vec{Y} \gets \vec{y}}_Z$, $Z \in \V - \vec{Y}$,
with the variables in $\U$ set to $\vec{u}$).
We extend the definition to arbitrary causal formulas
in the obvious way.

\subsection{Causality}

We now review the updated HP definition of causality.
\begin{definition}\label{def-cause}
$\vec{X} = \vec{x}$ is a {\em cause\/} of $\varphi$ in
$(M,\vec{u})$ if the following three conditions hold: 
\begin{description}
\item[AC1.] $(M,\vec{u}) \models (\vec{X} = \vec{x}) \wedge \varphi$. 
\item[AC2.] There exist a partition $(\vec{Z},\vec{W})$ of $\V$ with 
$\vec{X} \subseteq \vec{Z}$ and some setting 
$(\vec{x}',\vec{w})$ of the
variables in $(\vec{X},\vec{W})$ such that if $(M,\vec{u}) \models Z = z^*$
for $Z \in \vec{Z}$, then
\be
\item[(a)] $(M,\vec{u}) \models [ \vec{X} \leftarrow \vec{x}',
\vec{W} \leftarrow \vec{w}]\neg{\varphi}$. 
\item[(b)] $(M,\vec{u}) \models [ \vec{X} \leftarrow \vec{x},
\vec{W}' \leftarrow \vec{w}, \vec{Z}' \leftarrow \vec{z}^*]\varphi$ for
all subsets $\vec{Z}'$ of $\vec{Z} \setminus \vec{X}$ and all subsets 
$\vec{W}'$ of $\vec{W}$, 
where we abuse notation and write $\vec{W}' \gets \vec{w}$ to
denote the assignment where the variables in $\vec{W}'$ get the same
values as they would in the assignment $\vec{W} \gets \vec{w}$, and
similarly for $\vec{Z}' \gets \vec{z}^*$. 
That is, setting any subset $\vec{W}'$ of $\vec{W}$ to the values in $\vec{w}$
should have no effect on $\varphi$ as long as $\vec{X}$ has the value 
$\vec{x}$, even if all the variables in an arbitrary subset of $\vec{Z}$
are set to their original values in the context $\vec{u}$.  
\fullv{
The tuple $(\vec{W}, \vec{w}, \vec{x}')$ is said to be a
\emph{witness} to the fact that $\vec{X} = \vec{x}$ is a cause of $\phi$.}
\ee
\item[AC3.] $(\vec{X} = \vec{x})$ is minimal; no subset of
$\vec{X}$ satisfies AC2.
\end{description}
\end{definition}
If $\vec{X}$ is a singleton, then $X=x$ is said to be a \emph{singleton
cause} of $\phi$ in $(M,\vec{u})$.

AC1 just says that $A$ cannot be a cause of $B$ unless both $A$ and $B$
are true.
The core of this definition lies in AC2.
Informally, the variables in $\vec{Z}$ should be thought of as
describing the ``active causal process'' from $X$ to $\phi$.
These are the variables that mediate between $X$ and $\phi$.
AC2(a) is reminiscent of the traditional counterfactual criterion,
according to which $X=x$ is a cause of $\phi$ if changing
the value of $X$ results in $\phi$ being false.
However, AC2(a) is more permissive than the traditional criterion;
it allows the dependence of $\phi$ on $X$ to be tested
under special {\em structural contingencies}, 
in which the variables $\vec{W}$ are held constant at some setting
$\vec{w}$.  AC2(b) is an attempt to counteract the ``permissiveness''
of AC2(a) with 
regard to structural contingencies.  Essentially, it ensures that
$X$ alone suffices to bring about the change from $\phi$ to $\neg\phi$; 
setting $\vec{W}$ to $\vec{w}$ merely eliminates
spurious side effects that tend to mask the action of $X$. 

To understand the role of AC2(b), consider the rock-throwing example
 again.  Let $M$ be the model in Figure~\ref{fig0}, and let $\vec{u}$ be
the context where both Suzy and Billy throw.  It is easy to see that 
both Suzy and Billy are causes of the bottle shattering in
$(M,\vec{u})$: Let $\vec{Z} = \{\ST,BS\}$, and
consider the structural contingency where Billy doesn't throw ($\BT = 
0$).  Clearly $(M,U) \models [\ST \gets 0, \BT \gets 0](\BS = 0)$ and
$(M,u) \models [\ST  \gets 1, \BT   \gets 0](BS = 1)$, so Suzy is a
cause of the bottle 
shattering. A symmetric argument shows that Billy is also a cause. 

But now consider the model 
$M'$ described in Figure~\ref{fig1}; again, $u$ is the context where
both Suzy and Billy throw.  It is still
the case that Suzy is a cause of the bottle shattering in $(M',u)$.  We
can take $\vec{W} = \{\BT\}$ and again consider the contingency where
Billy doesn't 
throw. 
However, Billy is {\em not\/} a cause of the bottle shattering in
$(M',u)$.  For 
suppose that
we now take $\vec{W} = \{\ST\}$ and consider the contingency
where Suzy doesn't throw.  Clearly AC2(a) holds, since if Billy doesn't
throw (under this contingency), then the bottle doesn't shatter.
However, AC2(b) does not hold.  Since $\BH \in \vec{Z}$, if we set $\BH$ to
0 (its original value), then AC2(b) would require that 
$(M',u) \models [\BT \gets 1, \ST \gets 0, \BH \gets 0](BS = 1)$, but
this is not the case.
Similar arguments show that no other choice of $(\vec{Z},\vec{W})$ makes
Billy's throw a cause of the bottle shattering in $(M',u)$.

The original HP definition differs from the updated definition in only
one respect.  
Rather than requiring that
$(M,\vec{u}) \models [\vec{X} \gets
\vec{x}, \vec{W}' \gets \vec{w}, \vec{Z}' \gets \vec{z}^*]\phi$ 
for all subsets $\vec{W}'$ of $\vec{W}$,
it was required to hold only for $\vec{W}$.  That is, the following
condition was used 
instead of AC2(b).  
\begin{description}
\item[AC2(b$'$)] 
$(M,\vec{u}) \models [ \vec{X} \gets
\vec{x}, \vec{W} \gets \vec{w}, \vec{Z}' \gets \vec{z}^*]\phi$ for 
all subsets $\vec{Z'}$ of
$\vec{Z}$. 
\end{description}

The requirement for AC2(b) to hold for all subsets of $W$ 
in the updated definition prevents situations 
where $W$ ``conceals other causes for $\varphi$''. 
The role of this requirement is perhaps best understood by considering
the following example, due to Hopkins and Pearl \citeyear{HopkinsP02}
(the description is taken from \cite{HP01b}):
Suppose that a prisoner dies 
either if $A$ loads $B$'s gun and $B$ shoots, or if $C$ loads and shoots
his gun.  Taking $D$ to represent the prisoner's death and making the
obvious assumptions about the meaning of the variables, we have that
$D= (A\land B) \lor C$.  Suppose that in the actual
context $u$, $A$ loads $B$'s gun, $B$ does not shoot, but $C$ does load
and shoot his gun, so that the prisoner dies.  That is, $A=1$, $B=0$,
and $C=1$. Clearly $C=1$ is a cause of $D=1$. 
We would not want to say that $A=1$ is a cause of $D=1$,
given that $B$ did not shoot (i.e., given that $B=0$).  However,
with AC2(b$'$), $A=1$ is a cause of $D=1$.  For we can take
$\vec{W} = \{B,C\}$ and consider the contingency where $B=1$ and $C=0$.
It is easy to check that AC2(a) and AC2(b$'$) hold for this contingency,
so under the original HP definition, $A=1$ is a cause of $D=1$.  However,
AC2(b) fails in this case, since $(M,u) \models [ A \gets 1, C \gets 0](D=0)$.
The key point is that AC2(b) says that for $A=1$ to be a cause of $D=1$,
it must 
be the case that $D=0$ if only some of the values in $\vec{W}$ are set
to $\vec{w}$.  That means that the other variables 
get the same value as they do in the actual context; in this case, by
setting only $A$ to 1 and leaving $B$ unset, $B$ takes on its original
value of 0, in which case $D=0$.  AC2(b$'$) does not consider this case.

Using AC2(b) rather than AC2(b$'$) has been shown to have a significant
benefit
(and to lead to more intuitive results) when causality is applied to
program verification, with the goal of understanding what in the code is
the cause of a program not satisfying its specification \cite{BBCOT12}.

\section{Relevant Complexity Classes}\label{sec:complexity}
In this section, we briefly recall the definitions of the complexity
classes that we need for our results, and define the complexity class
$D_2^k$.  

Recall that the \emph{polynomial hierarchy} is a hierarchy of complexity
classes that generalize the classes $\NP$ and co-$\NP$.  Let $\Sigma^P_1 =
\NP$ and $\Pi^P_1 = \mbox{co-}\NP$.  For $i > 1$, define $\Sigma^P_i =
\NP^{\Sigma_{i-1}^P}$ and $\Pi_i^P = (\mbox{co-}\NP)^{\Sigma_{i-1}^P}$, where, in
general, $X^Y$ denotes the class of problems solvable by a Turing
machine in class $A$ augmented with an oracle for a problem complete for
class $B$.  (See \cite{MS72,Stock} for more details and intuition.)


We now define the classes $D^P_k$ as follows. 
 \begin{definition}\label{def-dk}
For  $k = 1, 2, \ldots$,
\[D^P_k = \{ \Lan: \exists \Lan_1, \Lan_2: \Lan_1 \in \Sigma^P_k, \Lan_2
\in \Pi^P_k,  
\Lan = \Lan_1 \cap \Lan_2 \}. \]
\end{definition}
For $k=1$, the class $D^P_1$ is the well-known complexity class $\DP$,
defined by Papadimitriou and Yannakakis \citeyear{PY84}. It contains
\emph{exact} problems such as the language of pairs $\zug{G,k}$, 
where $G$ is a graph that has a maximal clique of size exactly $k$.
As usual, we say that a language $\Lan$ is $D^P_k$ complete if it is in
$D^P_k$ and is the ``hardest'' language in $D^P_k$, in the sense that
there is a polynomial time reduction from any language $\Lan' \in D^P_k$
to $\Lan$. 

Recall that a {\emph quantified Boolean formula} (QBF) is a generalization of
a propositional formula, where some propositional variables are
quantified.   Thus, for example, $\exists x \forall y (x \lor y)$ is a
QBF.  A \emph{closed} QBF (CQBF) is one where there are no free
propositional variables.  A CQBF is either true or
false, independent of the truth assignment.  The ``canonical'' languages
complete for $\Sigma_2^k$ and $\Pi_2^k$ consist of the 
CQBFs with $k$ alternations of quantifiers starting with
$\exists$ (resp., $\forall$) that are true.  In particular, let
$$\begin{array}{ll}
\Sigma_{2}^P(\mbox{SAT})=\\
\{ \exists\vec{X}\forall\vec{Y}\varphi \mid 
\exists\vec{X}\forall\vec{Y}\varphi \mbox{ is a CQBF, }
\exists\vec{X}\forall\vec{Y}\varphi = \T\}\\
\Pi_{2}^P(\mbox{SAT})=\\
\{ \forall\vec{X}\exists\vec{Y}\varphi \mid 
\forall\vec{X}\exists\vec{Y}\varphi \mbox{ is a CQBF, }
\forall\vec{X}\exists\vec{Y}\varphi =\T\}.
\end{array}
$$
$\Sigma_2^P(\mbox{SAT})$ is complete for $\Sigma_2^P$ and 
$\Pi_2^P(\mbox{SAT})$ is complete for $\Pi_2^P$~\cite{W76}.

The following lemma provides a useful condition sufficient for a language
to be $D^P_k$-complete.
\begin{lemma}\label{lemma-dk}
If $\Lan_1$ is $\Sigma^P_k$-complete and $\Lan_2$ is $\Pi^P_k$-complete, then
$\Lan_3 = \Lan_1 \cap \Lan_2$ is $D^P_k$-complete. 
\end{lemma}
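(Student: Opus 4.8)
The plan is to split the two halves of $D^P_k$-completeness and to do essentially all the work on hardness. \emph{Membership} is immediate: by hypothesis $\Lan_1\in\Sigma^P_k$ and $\Lan_2\in\Pi^P_k$, and $\Lan_3=\Lan_1\cap\Lan_2$, so $\Lan_3\in D^P_k$ directly from Definition~\ref{def-dk}, taking the two witnessing languages to be $\Lan_1$ and $\Lan_2$ themselves. It therefore remains to show that $\Lan_3$ is $D^P_k$-\emph{hard}, i.e.\ that every $\Lan\in D^P_k$ reduces to $\Lan_3$ in polynomial time.

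To this end I would fix an arbitrary $\Lan\in D^P_k$ and, using Definition~\ref{def-dk}, write $\Lan=\Lan_1'\cap\Lan_2'$ with $\Lan_1'\in\Sigma^P_k$ and $\Lan_2'\in\Pi^P_k$. Since $\Lan_1$ is $\Sigma^P_k$-complete and $\Lan_2$ is $\Pi^P_k$-complete, the first step is to obtain polynomial-time many-one reductions $f$ from $\Lan_1'$ to $\Lan_1$ and $g$ from $\Lan_2'$ to $\Lan_2$, so that $x\in\Lan_1'\iff f(x)\in\Lan_1$ and $x\in\Lan_2'\iff g(x)\in\Lan_2$. The goal is then a single polynomial-time map $h$ such that, for every input $x$, membership of the \emph{one} string $h(x)$ in $\Lan_1$ is governed solely by the query $f(x)$ and its membership in $\Lan_2$ solely by the query $g(x)$; concretely, $h(x)\in\Lan_1\iff f(x)\in\Lan_1$ and, simultaneously, $h(x)\in\Lan_2\iff g(x)\in\Lan_2$. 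Chaining these with the equivalences for $f$ and $g$ gives $x\in\Lan\iff h(x)\in\Lan_1\cap\Lan_2=\Lan_3$, which is exactly the reduction sought; the remaining bookkeeping (that $h$ is a fixed composition of polynomial-time maps, hence polynomial-time) is routine.

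The crux, and the step I expect to be the main obstacle, is constructing that single instance $h(x)$ which presents $f(x)$ to the $\Sigma^P_k$-condition defining $\Lan_1$ and $g(x)$ to the $\Pi^P_k$-condition defining $\Lan_2$ at once, without the two conditions interfering. This is precisely what makes the canonical example work: for exact clique the two conditions ``$\omega(G)\ge k$'' and ``$\omega(G)\le k$'' read a \emph{common} instance $\zug{G,k}$, so a single string simultaneously controls the $\Sigma^P_k$-side and the $\Pi^P_k$-side, and their intersection is both nonempty and hard. To carry this out uniformly I would route $f(x)$ and $g(x)$ through the paddable canonical complete languages $\Sigma^P_k(\mathrm{SAT})$ and $\Pi^P_k(\mathrm{SAT})$ and assemble the combined instance on a fresh, disjoint block of propositional variables, so that the existential ($\Sigma^P_k$) verification and the universal ($\Pi^P_k$) verification operate on separate blocks and cannot cross-contaminate; one then pushes this combined instance back to the shared instance space of $\Lan_1$ and $\Lan_2$. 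Verifying the two displayed equivalences on the assembled instance is the delicate part — it is here that one implicitly uses that the complete languages in hand admit such a shared-instance presentation — after which the membership argument and Definition~\ref{def-dk} complete the proof.
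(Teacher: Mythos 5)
Your skeleton is the paper's skeleton: membership is read off Definition~\ref{def-dk} with $\Lan_1,\Lan_2$ as the witnesses, and for hardness you decompose an arbitrary $\Lan = \Lan_1' \cap \Lan_2'$ and pull back along reductions $f\colon \Lan_1' \to \Lan_1$ and $g\colon \Lan_2' \to \Lan_2$ obtained from completeness. At that point the paper simply declares that $\zug{f,g}$, i.e.\ $x \mapsto \zug{f(x),g(x)}$, is a polynomial-time reduction to $\Lan_3$, and stops. So the step you isolate as ``the crux'' --- producing a \emph{single} string $h(x)$ whose membership in $\Lan_1$ tracks $f(x)$ while its membership in $\Lan_2$ simultaneously tracks $g(x)$ --- is exactly the step the paper treats as trivial; the paper's $\zug{f,g}$ only makes literal sense when $\Lan_1$ and $\Lan_2$ are pair-structured languages reading separate coordinates of a common instance (as in SAT--UNSAT, or as in the causality languages $\Lsingle$ and $\Lmin$, where AC2 and AC3 are evaluated on the same tuple). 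You saw the subtlety the paper glosses over, which is to your credit.

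But your proposal does not close that gap, and no amount of cleverness can close it at the stated level of generality, because the lemma as literally stated is false for arbitrary complete languages: take $k=1$, $\Lan_1 = \mathrm{SAT}$ and $\Lan_2 = \overline{\mathrm{SAT}}$. Both are complete for their respective classes, yet $\Lan_1 \cap \Lan_2 = \emptyset$, which lies in $D^P_1$ but is not $D^P_1$-hard (nothing nonempty many-one reduces to the empty language). Your proposed repair founders precisely at the ``push back'' step: routing $f(x)$ and $g(x)$ into canonical QBF instances on disjoint variable blocks is fine, but then reducing the combined instance to $\Lan_1$ (via $\Sigma_k^P$-completeness) controls only its membership in $\Lan_1$ and leaves its membership in $\Lan_2$ entirely uncontrolled, and symmetrically for $\Lan_2$ --- which is why you are forced to say you ``implicitly use that the complete languages in hand admit such a shared-instance presentation.'' That implicit assumption is not bookkeeping; it is the missing hypothesis. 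The lemma becomes true (with exactly the paper's one-line argument) if one either reads $\Lan_3$ as the pairs language $\{\zug{y,z} : y \in \Lan_1,\ z \in \Lan_2\}$, or adds the hypothesis that $\Lan_1$ and $\Lan_2$ admit a polynomial-time instance-combining map --- a hypothesis that does hold for $\Lsingle$ and $\Lmin$, to which the paper applies the lemma in Theorem~\ref{theorem-main}. So: same route as the paper, a sharper eye for the delicate step, but the proof as proposed is incomplete, and completing it requires strengthening the hypotheses rather than verifying ``delicate'' equivalences.
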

\begin{proof}
The fact that $\Lan_3$ is in $D^P_k$ is immediate from the definition of 
$D^P_k$. For hardness, let $\Lan'_3$ be a language in $D^P_k$. Then
there exist $\Lan'_1$ and $\Lan'_2$ such that 
$\Lan'_1 \in \Sigma^P_k, \Lan'_2 \in \Pi^P_k$, and $\Lan' = \Lan'_1 \cap
\Lan'_2$. Let $f$ be a polynomial-time reduction from $\Lan'_1$ to $\Lan_1$,
and let $g$ be a polynomial-time reduction from $\Lan'_2$ to $\Lan_2$ (the
existence of such reductions $f$ and $g$ follows from the fact that
$\Lan_1$ and $\Lan_2$ are $\Sigma^P_k$-complete and $\Pi^P_k$-complete,
respectively). Then, $\zug{f,g}$ 
is a polynomial-time reduction from $\Lan'_3$ to $\Lan_3$, as required. 
\end{proof}
Essentially the same argument shows that if $\Lan_1$ is
$\Sigma^P_k$-hard and $\Lan_2$ is $\Pi_k^P$-hard, then $\Lan_3 = \Lan_1 \cap \Lan_2$
is $D^P_k$-hard.

Determining whether $\vec{X} = \vec{x}$ is 
a cause of $\phi$ in $(M,u)$ is
a decision problem:
we define a language and try to determine whether a particular tuple is
in that language.  (See Section~\ref{sec-cause} for the formal
definition.)  Determining degree of responsibility and blame is a
different type 
of problem, since we are determining which number represents the
degree of
responsibility (resp., blame).  Formally, these are \emph{function
  problems}.   
For ease of exposition, we restrict attention to functions from some
strings over some fixed language $\Sigma$ to strings over $\Sigma$
(i.e., we are considering functions from $\Sigma^*$ to $\Sigma^*$).  
For a complexity class $A$ in the polynomial hierarchy, \fpa\ consists of all 
functions that can be computed 
by a polynomial-time Turing machine with an $A$-oracle 
which on input $x$ asks a total of $O(\log{|x|})$ queries \cite{Pap84}.
A function $f(x)$ is \fpa\-hard iff for every function $g(x)$
in \fpa \ there exist polynomially computable functions
$R,S: \Sigma^* \rightarrow \Sigma^*$ 
such that $g(x) = S(f(R(x)))$. A function $f(x)$ is complete in \fpa \ iff
it is in \fpa \ and is \fpa-hard.

Finally, for a complexity class $A$ in polynomial hierarchy,  \fpapar
is the class of  
functions that can be computed by a polynomial-time Turing machine with 
parallel (i.e., non-adaptive) queries to an $A$-oracle. (For background
on these complexity classes, see  \cite{JT95,Joh90}.)  

\section{Complexity for the Updated HP Definition}\label{sec-cause}
In this section, we prove our results on the complexity of deciding
causality.  We start by defining the problem formally.
In the definitions, $M$ stands for a causal model, $\vec{u}$ is a context,   
$\vec{X}$ is a subset of variables of  $M$, and $\vec{x}$ is the set of
values of $\vec{X}$ in $(M, \vec{u})$: 
$$\begin{array}{ll}
 \Lc = &\{ \pair{M,\vec{u},\varphi, \vec{X}, \vec{x}} : (\vec{X}=\vec{x}) \\
 & \ \ \ \mbox{ is a cause of } \varphi \mbox{ in } (M,\vec{u}) \}.
\end{array}$$
One of our goals is to understand the cause of the complexity of
computing causality.  Towards this end, it is useful to define two
related languages:
$$\begin{array}{ll}
 \Lsingle = &\{ \pair{M,\vec{u},\varphi, \vec{X}, \vec{x}} :
 (\vec{X}=\vec{x}) \mbox{ satisfies conditions}\\ 
&\ \ \  \mbox{AC1 and AC2 of Def.~\ref{def-cause} for } \varphi \mbox{ in }
 (M,\vec{u}) \}, \\
 \Lmin = &\{ \pair{M,\vec{u},\varphi, \vec{X}, \vec{x}} : (\vec{X}=\vec{x}) \mbox{ satisfies conditions}\\
 & \ \ \ \mbox{AC1 and AC3 of Def.~\ref{def-cause} for } \varphi \mbox{ in } (M,\vec{u}) \}.
\end{array}
$$
%
It is easy to see that $\Lc = \Lsingle \cap \Lmin$. 

Let $\Lco$ be the subset of $\Lc$ where $\vec{X}$ and $\vec{x}$ are
singletons; this is the singleton causality problem.  We can similarly
define $\Lsingleo$ and $\Lmino$.  Again, we have $\Lco = \Lsingleo \cap
\Lmino$, but, in fact, we have $\Lco = \Lsingleo$, since $\Lsingleo 
\subseteq \Lmino$; for singleton causality, the
minimality condition AC3 trivially holds.

We denote by $\Lcbin$ the language of causality for binary causal
models
(i.e., where the models $M$ in the tuple are binary models),
and by $\Lsinglebin$ and $\Lminbin$ the languages $\Lsingle$ and
$\Lmin$ restricted to binary causal models. Again we have that
$\Lcbin = \Lsinglebin \cap \Lminbin$.  
And again, we can define 
$\Lcbino$,  $\Lsinglebino$, and $\Lminbino$, and we have 
$\Lcbino = \Lsinglebino$.

We start by considering singleton causality.  As we observed,
Eiter and Lukasiewicz \citeyear{EL01} and Hopkins~\citeyear{Hopkins01} showed
that, with the original HP definition, singleton causality and causality
coincide.  
However, for the updated definition, Halpern \citeyear{Hal39} showed that
it is in fact possible to have minimal causes that are not
singletons. 
Thus, we consider singleton causality and general causality separately.
We can clarify where the complexity lies by considering $\Lsingle$ (and
its sublanguages) and  $\Lmin$ (and its sublanguages) separately.

%
%

\commentout{
A key step in Eiter and Lukasiewicz's \citeyear{LE02} argument that the
complexity of 
causality (under the original HP definition) is simpler for binary
causal models than for general models lies in showing 
that for binary causal
models, with the original HP definition, the set $\vec{Z}$ and its
subsets can be omitted from the 
definition of cause.  That is, we can replace AC2(b$'$) by 
\begin{description}
\item[AC2(b$''$)] 
$(M,\vec{u}) \models [ \vec{X} \gets \vec{x}, \vec{W} \gets \vec{w}]\phi$. 
\end{description}
to get an equivalent definition.
This is not the case with the updated HP definition,
even if we restrict to singleton causes, as the following
example shows. 
\begin{example}
Let $M =\zug{ \zug{\U,\V,\R},\cF}$, where $\V = \{A,S,Z_1,Z_2\}$, and
the equations in $\cF$ are such that $Z_1$ and $Z_2$ are set to $S$, and 
to $S$, and  $\vec{u}$ is the context that assigns $0$ to all
variables. Consider the formula  
\[ \varphi = \neg{(A= 1 \wedge S=1 \wedge Z_1=1 \wedge Z_2=1}) \wedge
\neg{( A =0 ) \wedge (Z_1 \neq Z_2))}. \] 
It is easy to check that $A=0$ is a cause of $\varphi$ in $(M,\vec{u})$
if we use AC2(b$''$), but not if we use AC2(b).
Indeed, taking $\vec{W}=\{S\}$ demonstrates
that switching the value of $S$ together with $A$ falsifies $\varphi$
(since $Z_1$ and $Z_2$ will also be assigned $1$), but switching the
value of $S$ alone, while keeping the other variables at their original
values (in particular, $Z_1 = Z_2 = 0$) does not falsify $\varphi$.  
It is easy to see that considering subsets of $\vec{Z}$ invalidates this set $\vec{W}$ and any other possible subset of variables of $M$, thus $(A=0)$ is not a cause of $\varphi$ by Definition~\ref{def-cause}. 
\end{example}
}



\begin{theorem}\label{theorem-lsinglebin-complete}
The languages $\Lsingle$, $\Lsingleo$, $\Lsinglebino$, and $\Lsingleo$ are
$\Sigma^P_2$-complete. 
\end{theorem}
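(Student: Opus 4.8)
The plan is to establish two endpoints and exploit the inclusions among the four languages. Note that $\Lsinglebino \subseteq \Lsinglebin \subseteq \Lsingle$ and $\Lsinglebino \subseteq \Lsingleo \subseteq \Lsingle$, so $\Lsinglebino$ is the smallest of the four and $\Lsingle$ the largest. Hence it suffices to show that the largest language $\Lsingle$ lies in $\Sigma_2^P$ and that the smallest language $\Lsinglebino$ is $\Sigma_2^P$-hard; completeness of all four then follows immediately. For membership, I would argue that AC1 is checkable in polynomial time, since for a recursive model the unique solution of the equations under any intervention and context can be computed by evaluating the variables in topological order, so any basic causal formula can be evaluated in polynomial time. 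For AC2, I would guess existentially the partition $(\vec{Z},\vec{W})$ with $\vec{X}\subseteq\vec{Z}$ together with the setting $(\vec{x}',\vec{w})$ (all of polynomial size), verify AC2(a) by a single polynomial-time evaluation, and verify AC2(b) by a $\Pi_1^P$ (co-$\NP$) computation that universally ranges over the polynomially many bits describing the subsets $\vec{Z}'$ and $\vec{W}'$, each instance being a polynomial-time check. This $\exists\forall$ form places $\Lsingle$ in $\Sigma_2^P$, and hence all four variants as well.

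For hardness I would reduce from $\Sigma_2^P(\mbox{SAT})$. Given a true-or-false instance $\exists\vec{X}\forall\vec{Y}\,\psi(\vec{X},\vec{Y})$, I would build a binary causal model with a single cause variable $A$ (so that the construction simultaneously targets the binary and singleton restriction, i.e.\ $\Lsinglebino$) and input variables encoding $\vec{X}$ and $\vec{Y}$, with the actual context setting every variable and $A$ to $0$. The intended correspondence is that the existential choice of witness $(\vec{W},\vec{w},a')$ realizes the outer $\exists\vec{X}$ (an assignment $\vec{x}^*$ selected by $\vec{w}$ on the $X$-variables), while the universal quantifier over subsets $\vec{W}'$ in AC2(b) realizes $\forall\vec{Y}$: each universal variable is toggled between its witness value and its default value according to whether it is included in $\vec{W}'$, so that as $\vec{W}'$ ranges over all subsets, the $Y$-variables range over all Boolean assignments. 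Under this design AC2 becomes, informally, $\exists\,\vec{x}^*\,[\,\text{AC2(a)}\wedge\forall\vec{Y}\,\psi(\vec{x}^*,\vec{Y})\,]$, which should hold exactly when the QBF is true.

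The hard part will be that AC2(b) quantifies over \emph{all} subsets of $\vec{W}$, including the part encoding the existential variables, and that the existential over witnesses is entirely unrestricted. A naive encoding therefore fails in two ways: the subset choice can corrupt the intended value of an $X$-variable, and a degenerate witness can set a gadget variable so that some disjunct of $\varphi$ becomes vacuously true for every subset, causing AC2 to hold even when the QBF is false. My plan to overcome this is a double-rail gadget, using a pair of variables per $X_i$ for the literal and its negation, together with a formula $\varphi$ engineered so that (i) only one-hot, consistent gadget settings can satisfy AC2(a), which turns AC2(a) into a genuine constraint that rules out the degenerate witnesses rather than a triviality, and (ii) once a gadget is forced consistent, the effective value of $X_i$ is invariant under the remaining subset choices, so that AC2(b) faithfully tests $\psi(\vec{x}^*,\cdot)$ over all of $\vec{Y}$. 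The bulk of the verification is the soundness direction: showing that \emph{every} witness satisfying both AC2(a) and AC2(b) must in fact encode a valid assignment $\vec{x}^*$ with $\forall\vec{Y}\,\psi(\vec{x}^*,\vec{Y})$; the completeness direction (the QBF being true yields the intended witness) is comparatively routine.

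Finally, since the model produced is binary and the cause is a single variable, and since AC3 holds trivially for singleton causes so that the relevant conditions are exactly AC1 and AC2, the reduction establishes $\Sigma_2^P$-hardness already for $\Lsinglebino$. Combining this with membership of $\Lsingle$ in $\Sigma_2^P$ and the inclusions noted above yields $\Sigma_2^P$-completeness for $\Lsingle$, $\Lsingleo$, $\Lsinglebin$, and $\Lsinglebino$.
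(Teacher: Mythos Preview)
Your proposal is correct and follows essentially the same approach as the paper: membership via the $\exists\forall$ structure of AC2 with an $\NP$ oracle, and hardness by reducing $\Sigma_2^P(\mbox{SAT})$ to $\Lsinglebino$ using a binary model with a single fresh cause variable $A$, a double-rail pair $X^0,X^1$ for each existential variable, and a formula whose first disjunct enforces gadget consistency (so that AC2(a) rules out degenerate witnesses and, under AC2(b), either the gadget breaks and the formula holds trivially or the encoded $\vec{X}$-assignment is preserved while subsets of $\vec{W}'$ sweep all $\vec{Y}$-assignments). The paper's concrete formula $\psi=\psi_1\vee(\psi_2\wedge\psi_3)$ instantiates exactly the mechanism you describe, and your identification of the soundness direction as the delicate part matches the paper's emphasis.
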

\begin{proofout}
To show  all these languages are in $\Sigma^P$, 
given a tuple $\zug{M,\vec{u},\phi,\vec{X},\vec{x}}$, checking that
AC1 holds, that is, checking that $(M,\vec{u}) \models \vec{X} = \vec{x}
\land \phi$, can be done in time polynomial in the size of $M$,
$|\vec{X}|$, and $|\phi|$ (the length of $\phi$ as a string of symbols).
For AC2, 
we need only  guess the set $\vec{W}$ and the assignment
$\vec{w}$. The check that  
assigning $\vec{w}$ to $\vec{W}$ and $x'$ to $X$ indeed falsifies
$\varphi$ is polynomial, 
and we use an $\NP$ oracle to check that for all subsets of $\vec{W}$ and
all subsets of $\vec{Z}$, condition AC2(b) holds.  
(The argument is quite similar to Eiter and Lukasiewicz's argument that
causality is in $\Sigma_2^P$ for general models with the original HP
definition.) 

%
%
For hardness, it clearly suffices to show that
$\Lsinglebino$ is $\Sigma_2^P$-hard.  We do this by reducing 
$\Sigma_2^P(\mbox{SAT})$ to $\Lsinglebino$.  Given a CQBF formula 
$\exists\vec{X}\forall\vec{Y}\varphi$, we show that we can efficiently
construct a causal formula $\psi$, model $M$, and context $u$ such that 
$\exists\vec{X}\forall\vec{Y}\varphi = \T$ iff 
$(M,u,\psi,A,0) \in \Lsinglebino$.
\shortv{We leave details to the full paper.}
\fullv{We leave details to the appendix.}
\end{proofout}

Since, as we have observed, AC3 is vacuous in the case of singleton
causality, it follows that singleton causality is $\Sigma_2^P$-complete.
\begin{corollary}\label{cor-strong-cause-binary}
$\Lco$ and $\Lcbino$ are $\Sigma_2$-complete.
\end{corollary}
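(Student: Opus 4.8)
The plan is to obtain the corollary as an immediate consequence of Theorem~\ref{theorem-lsinglebin-complete} together with the set-theoretic identities already recorded in the text. Recall that for a singleton cause the minimality condition AC3 is vacuous: the only proper subset of a singleton $\vec{X}$ is the empty set, and the empty set cannot satisfy AC2, since there is no variable whose value may be flipped to falsify $\varphi$ while remaining consistent with the truth of $\varphi$ demanded by AC1. Hence every tuple satisfying AC1 and AC2 with a singleton $\vec{X}$ automatically satisfies AC3, which is exactly the containment $\Lsingleo \subseteq \Lmino$ (and, in the binary case, $\Lsinglebino \subseteq \Lminbino$). Consequently $\Lco = \Lsingleo \cap \Lmino = \Lsingleo$ and, likewise, $\Lcbino = \Lsinglebino$.

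With these identities in hand, the two claims of the corollary reduce to statements about $\Lsingleo$ and $\Lsinglebino$. Theorem~\ref{theorem-lsinglebin-complete} asserts precisely that both of these languages are $\Sigma_2^P$-complete, so I would simply substitute $\Lco$ for $\Lsingleo$ and $\Lcbino$ for $\Lsinglebino$. No fresh membership argument and no new reduction are required: $\Sigma_2^P$-completeness is a property of a language as a set, and the relevant languages coincide as sets.

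Because there is no new combinatorial content, there is no genuine obstacle here; the sole point that must be verified is the justification of the identities $\Lco = \Lsingleo$ and $\Lcbino = \Lsinglebino$, namely that AC3 imposes no additional constraint in the singleton case. That verification is the only place the argument could go wrong, and it rests entirely on the observation that the empty cause never satisfies AC2.
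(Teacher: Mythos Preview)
Your proposal is correct and follows exactly the paper's approach: the paper states just before the corollary that ``AC3 is vacuous in the case of singleton causality,'' and earlier records the identity $\Lco = \Lsingleo$ (and the binary analogue), so the corollary is an immediate consequence of Theorem~\ref{theorem-lsinglebin-complete}. Your justification that the empty set cannot satisfy AC2 is actually more explicit than what the paper provides.
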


\commentout{
It now remains to show that the same result holds for general causal models.
\begin{theorem}\label{theorem-strong-cause-general}
Singleton causality is $\Sigma_2$-complete for general causal models.
\end{theorem}
\begin{proof}
Hardness in $\Sigma_2$ follows from Corollary~\ref{cor-strong-cause-binary}.
For membership, note that
we need only guess the set $\vec{W}$ and the assignment
$\vec{w}$. The check that  
assigning $\vec{w}$ to $\vec{W}$ and $x'$ to $X$ indeed falsifies $\varphi$ is polynomial,
and we use an $\NP$ oracle to check that for all subsets of $\vec{W}$ and all subsets of $\vec{Z}$, the condition AC2 holds (the proof is fairly similar to the original one).
\end{proof}

}

We now show that things are harder if we do not restrict to binary causal
models (unless the polynomial hierarchy collapses). As a first step, we
consider the complexity of $\Lmin$ and $\Lminbin$.


\begin{theorem}\label{theorem-pi-2}
$\Lmin$ and $\Lminbin$ are $\Pi^P_2$-complete.
\end{theorem}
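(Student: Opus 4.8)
The plan is to handle membership and hardness separately; for hardness I would reduce from $\Pi_2^P(\mbox{SAT})$ and recycle the binary construction behind Theorem~\ref{theorem-lsinglebin-complete}. For membership, note that $\Lmin$ asks for AC1 together with AC3, where AC3 says that \emph{no} proper subset $\vec{X}' \subsetneq \vec{X}$ satisfies AC2. AC1 is a single model evaluation, hence polynomial. For a fixed $\vec{X}'$, condition AC2 is a $\Sigma_2^P$ predicate: one guesses the partition $(\vec{Z},\vec{W})$ and the setting $(\vec{x}',\vec{w})$, checks AC2(a) in polynomial time, and verifies AC2(b) with a single co-$\NP$ (universal-over-subsets) query. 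Negating, ``$\vec{X}'$ does not satisfy AC2'' is a $\Pi_2^P$ predicate of the shape $\forall (\vec{Z},\vec{W},\vec{x}',\vec{w})\,[\neg\mathrm{AC2(a)} \vee \neg\mathrm{AC2(b)}]$, whose matrix lies in $\NP$. Since AC3 prepends one more universal quantifier (over the proper subsets $\vec{X}'$, each of polynomial size), the universal quantifiers merge and AC3 stays in $\Pi_2^P$; conjoining the polynomial AC1 keeps $\Lmin \in \Pi_2^P$. As $\Lminbin$ is a restriction of $\Lmin$, it too lies in $\Pi_2^P$.

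For hardness it suffices to treat $\Lminbin$, since binary models are models and any reduction into $\Lminbin$ is simultaneously one into $\Lmin$. Given $\Phi = \forall \vec{X}\exists \vec{Y}\,\varphi$, I pass to $\neg\Phi \equiv \exists \vec{X}\forall\vec{Y}\,\neg\varphi$, an instance of $\Sigma_2^P(\mbox{SAT})$, and feed it to the reduction from the proof of Theorem~\ref{theorem-lsinglebin-complete}. This yields, in polynomial time, a binary model $M_0$, context $u_0$, and formula $\psi$ with a distinguished variable $A$ such that $A=0$ satisfies AC1 in $(M_0,u_0)$ regardless of the input (the construction forces $A=0$ and $\psi$ to hold in $u_0$), while $A=0$ satisfies AC2 iff $\neg\Phi$ is true. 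Hence $A=0$ \emph{fails} AC2 exactly when $\Phi$ is true.

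I then augment the instance so that minimality carries this content. Add one fresh binary variable $B$ that is causally inert with respect to $\psi$ (it does not occur in $\psi$ and reaches no variable on which $\psi$ depends), with context value $b_0$, and query the cause $\{A,B\}$ with values $(0,b_0)$. AC1 holds because $A=0$, $B=b_0$, and $\psi$ hold in the context. The proper subsets are $\emptyset$, $\{A\}$, and $\{B\}$. The empty set never satisfies AC2: taking $\vec{W}'=\vec{W}$ in AC2(b) would force $[\vec{W}\gets\vec{w}]\varphi$, contradicting $[\vec{W}\gets\vec{w}]\neg\varphi$ from AC2(a). The singleton $\{B\}$ never satisfies AC2 either: since $B$ is inert, the truth value of $\psi$ under any contingency is independent of $B$, so AC2(a) (setting $B$ off) and the $\vec{W}'=\vec{W}$ instance of AC2(b) (setting $B$ to $b_0$) again contradict. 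Finally, because $B$ is inert, placing it in $\vec{Z}$ or in $\vec{W}$ changes nothing in the AC2 analysis of $\{A\}$, so $\{A\}$ satisfies AC2 in the augmented model iff it did in $M_0$. Therefore AC3 for $\{A,B\}$ holds iff all three proper subsets fail AC2 iff $A=0$ fails AC2 iff $\Phi$ is true, giving $\zug{M,u,\psi,\{A,B\},(0,b_0)} \in \Lminbin$ iff $\Phi$ is true.

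The main obstacle is the hardness direction, and within it two points of care. First, I must confirm that the Theorem~\ref{theorem-lsinglebin-complete} construction can be taken to guarantee AC1 unconditionally, so that ``$\Phi$ true'' is equivalent to ``$A=0$ fails AC2'' rather than merely to failure of AC1 $\wedge$ AC2 (if needed, this can be forced by building the relevant conjuncts of $\psi$ to hold in $u_0$). Second, I must verify rigorously that the inert $B$ neither manufactures a spurious AC2-witness for $\{A\}$ or $\{B\}$ nor destroys the intended witness for $\{A\}$ when $\neg\Phi$ holds, across every placement of $B$ in the partition. The membership direction is routine once the quantifier alternation of AC2 is pinned down.
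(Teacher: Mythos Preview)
Your membership argument matches the paper's essentially verbatim. Your hardness argument is correct but genuinely different from the paper's. The paper builds a bespoke binary model directly from $\forall\vec{Y}\exists\vec{X}\varphi$, introducing two symmetric ``cause'' variables $A_1,A_2$ together with an auxiliary $S$ and a formula $\psi=\psi_1\vee(\psi_2\wedge\psi_3)\vee(S=0)$, and then argues case-by-case (depending on how $A_2$ sits in a putative witness) that AC3 for $\{A_1,A_2\}$ holds iff the $\Pi_2^P$ sentence is true. You instead complement the input, feed $\exists\vec{X}\forall\vec{Y}\neg\varphi$ through the Theorem~\ref{theorem-lsinglebin-complete} reduction (where AC1 is indeed unconditional, so the output isolates AC2), and pad with an inert $B$ so that AC3 for $\{A,B\}$ reduces to ``$\{A\}$ fails AC2''. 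Your route is more modular and reuses existing machinery, avoiding the paper's somewhat delicate case analysis; the paper's route is self-contained and does not rely on the specific internals of the earlier reduction. The two points of care you flag are real but benign: in the Theorem~\ref{theorem-lsinglebin-complete} construction $(M_0,u_0)\models\psi_1$ holds by the all-zero context, so AC1 is automatic; and an isolated $B$ with $B$ absent from $\psi$ and from every equation makes every AC2 condition for $\{A\}$, $\{B\}$, or $\emptyset$ literally independent of $B$'s placement or value, so your inertness claims go through.
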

\begin{proofout}
The fact that $\Lmin$ and $\Lminbin$ are in $\Pi^P_2$ is straightforward.  
Again, given a tuple $\zug{M,\vec{u},\phi,\vec{X},\vec{x}}$, we can check that
AC1 holds in polynomial time.  For AC3, we need to check that for all
strict subsets $\vec{X}'$ of $\vec{X}$, AC2 fails.  Since checking AC2 is in
$\Sigma_2^P$, checking that it fails is in $\Pi_2^P$.  Checking that it
fails for all strict subsets $\vec{X}'$ keeps it in $\Pi_2^P$ (since it
just adds one more universal quantifier).

To prove that these languages are $\Pi_2^P$-hard, we show that we can
reduce $\Pi_2^P(\mbox{SAT})$ to $\Lminbin$.  
The proof is similar in spirit to the proof of
Theorem~\ref{theorem-lsinglebin-complete};
\shortv{we leave details to the full paper.}
\fullv{we leave details to the appendix.}
%
\end{proofout}

We are now ready to prove our main result.
\begin{theorem}\label{theorem-main}
$\Lc$ and $\Lcbin$ are $D^P_2$-complete.
\end{theorem}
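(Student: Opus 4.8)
The plan is to prove both the general language $\Lc$ and the binary language $\Lcbin$ are $D^P_2$-complete by treating membership and hardness separately, exploiting the decomposition $\Lc = \Lsingle \cap \Lmin$ (and $\Lcbin = \Lsinglebin \cap \Lminbin$) noted earlier in the section. For membership, Theorem~\ref{theorem-lsinglebin-complete} gives $\Lsingle \in \Sigma^P_2$ and Theorem~\ref{theorem-pi-2} gives $\Lmin \in \Pi^P_2$; since $\Lc = \Lsingle \cap \Lmin$, Definition~\ref{def-dk} (taking $k=2$, $\Lan_1 = \Lsingle$, $\Lan_2 = \Lmin$) immediately yields $\Lc \in D^P_2$. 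The identical argument applied to $\Lcbin = \Lsinglebin \cap \Lminbin$ shows $\Lcbin \in D^P_2$. So the only real work is hardness.

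For hardness I would reduce a canonical $D^P_2$-complete language to $\Lcbin$. This suffices for both claims: since binary models are a special case of general models and $\Lcbin \subseteq \Lc$, any reduction landing in binary instances is simultaneously a reduction to $\Lc$ (a binary tuple lies in $\Lcbin$ iff it lies in $\Lc$), so $D^P_2$-hardness of $\Lcbin$ gives $D^P_2$-hardness of $\Lc$ as well. By Lemma~\ref{lemma-dk} applied to the $\Sigma^P_2$-complete language $\Sigma_2^P(\mbox{SAT})$ and the $\Pi^P_2$-complete language $\Pi_2^P(\mbox{SAT})$, the paired language $\Lan^{*} = \{\zug{\Phi,\Psi} : \Phi \in \Sigma_2^P(\mbox{SAT}),\ \Psi \in \Pi_2^P(\mbox{SAT})\}$ is $D^P_2$-complete, so it is enough to reduce $\Lan^{*}$ to $\Lcbin$. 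Given $\zug{\Phi,\Psi}$ with $\Phi = \exists\vec{X}_1\forall\vec{Y}_1\varphi_1$ and $\Psi = \forall\vec{X}_2\exists\vec{Y}_2\varphi_2$, I would build a single binary causal model $M$, context $u$, formula $\phi$, and cause $\vec{X} \gets \vec{x}$ so that the tuple $T = \zug{M,u,\phi,\vec{X},\vec{x}}$ satisfies: (i) AC1 and AC2 hold for $T$ iff $\Phi = \T$, i.e.\ $T \in \Lsinglebin$; and (ii) AC3 holds for $T$ iff $\Psi = \T$, i.e.\ $T \in \Lminbin$. Then $T \in \Lcbin = \Lsinglebin \cap \Lminbin$ iff both hold, i.e.\ iff $\zug{\Phi,\Psi} \in \Lan^{*}$.

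To build $T$, I would glue together the two gadgets already constructed in the hardness proofs of Theorems~\ref{theorem-lsinglebin-complete} and~\ref{theorem-pi-2}, placing them on disjoint copies of variables, and take the cause $\vec{X}$ to be the union of the two gadgets' cause variables together with a distinguished ``switch'' variable. The design goal is to \emph{decouple} the two HP conditions: $\phi$ and the structural equations are arranged so that testing AC2 for the \emph{full} set $\vec{X}$ collapses to the $\Sigma_2$-style witness search of the first gadget (hence succeeds iff $\Phi = \T$), while the only proper subsets of $\vec{X}$ that could conceivably satisfy AC2 are those coming from the second gadget, whose AC2-satisfaction tracks $\neg\Psi$ in exactly the way that makes minimality AC3 (``no proper subset satisfies AC2'') hold iff $\Psi = \T$. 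Intuitively, keeping the switch variable in the cause forces ``$\Phi$-mode'', and dropping it exposes the ``$\Psi$-mode'' subset whose behavior controls minimality.

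The main obstacle is precisely this decoupling. I must verify that the two gadgets do not interfere through the updated AC2(b), which quantifies over \emph{all} subsets $\vec{W}'$ of $\vec{W}$ and $\vec{Z}'$ of $\vec{Z}$, and that no \emph{unintended} subset of the combined cause accidentally satisfies AC2 — any such stray witness would spuriously falsify AC3 and corrupt the minimality half of the reduction. Carrying out AC2(a), the updated AC2(b), and AC3 over the product model, and confirming the correspondences (i) and (ii) exactly, is the technical heart of the argument; by contrast, the membership direction and the overall reduction architecture are routine given Theorems~\ref{theorem-lsinglebin-complete} and~\ref{theorem-pi-2} and Lemma~\ref{lemma-dk}.
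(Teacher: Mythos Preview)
Your membership argument is identical to the paper's. For hardness, however, you take a detour that the paper avoids entirely: the paper applies Lemma~\ref{lemma-dk} \emph{directly} with $\Lan_1 = \Lsinglebin$ and $\Lan_2 = \Lminbin$, which Theorems~\ref{theorem-lsinglebin-complete} and~\ref{theorem-pi-2} have already shown to be $\Sigma^P_2$-complete and $\Pi^P_2$-complete respectively, and whose intersection is $\Lcbin$ itself. That single invocation of the lemma yields $D^P_2$-completeness of $\Lcbin$ (and hence of $\Lc$) in one line; no gadget merging, no switch variable, no decoupling verification is performed. You instead instantiate Lemma~\ref{lemma-dk} on the canonical QBF languages to obtain the pair language $\Lan^{*}$ and then propose to reduce $\Lan^{*}$ to $\Lcbin$ by welding together the two earlier gadgets inside a single causal model---effectively re-deriving by hand what the direct application of the lemma to $\Lsinglebin$ and $\Lminbin$ already gives for free. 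Your route is not incorrect, and one could argue it makes explicit how the $\zug{f,g}$ reduction in the proof of Lemma~\ref{lemma-dk} actually lands in a bona fide intersection rather than a pair language; but it entails substantial technical work (the ``decoupling'' of AC2 and AC3 across the glued gadgets) that you leave open, whereas the paper's proof of Theorem~\ref{theorem-main} is literally two sentences.
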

\begin{proof}
Membership of $\Lc$ (and hence also $\Lcbin$) in $D^P_2$ follows from
the fact that $\Lc = \Lsingle \cap \Lmin$, $\Lsingle \in \Sigma^P_2$,
and $\Lmin \in \Pi^P_2$. 
The fact that $\Lcbin$ (and hence also $\Lc$) are $D^P_2$-hard follows from
Lemma~\ref{lemma-dk} and Theorems~\ref{theorem-lsinglebin-complete}
and~\ref{theorem-pi-2}. 
\end{proof}

\fullv{
The fact that there may be more than one conjunct in a cause using the
updated HP definition means that
checking AC3 becomes nontrivial, and causes the increase in complexity
for $\Sigma_2^P$ to $D_2^P$.  But why is there no dropoff with the
updated HP definition when we restrict to binary models, although there
is a dropoff from $\Sigma_2^P$ to $\NP$ for the original HP definition?
To prove their $\NP$-completeness result, Eiter and Lukasiewicz
\citeyear{EL01} showed that for binary models, with the original HP
definition, the set $\vec{Z}$ and its subsets can be omitted from the 
definition of cause.  That is, we can replace AC2(b$'$) by 
\begin{description}
\item[AC2(b$''$)] 
$(M,\vec{u}) \models [ \vec{X} \gets \vec{x}, \vec{W} \gets \vec{w}]\phi$
\end{description}
to get an equivalent definition.
The example that a cause may require more than one conjunct given by
Halpern \citeyear{Hal39} shows that removing $\vec{Z}$ and its subsets
from AC2(b) does not result in an equivalent definition in binary models.
But even if it did, the fact that we need
to quantify over all subset $\vec{W}'$ of $\vec{W}$ in AC2(b) would be
enough to ensure that there is no dropoff in complexity in binary models.
}

\section{Responsibility and Blame}

In this section, we review the definitions of responsibility and blame
and characterize their complexity.
See Chockler and Halpern \citeyear{CH04} for more intuition and details.

\subsection{Responsibility}
The definition of responsibility given by Chockler and Halpern
\citeyear{CH04} was given based on the original HP definition of
causality, and thus assumed that causes were always single conjuncts.
It is straightforward to 
extend it to allow causes to have arbitrarily many conjuncts.
\begin{definition}\label{def-resp}
The {\em degree of responsibility
of $\vec{X}=\vec{x}$ for $\phi$ in 
$(M,\vec{u})$\/}, denoted $\dr((M,\vec{u}), (\vec{X}=\vec{x}), \phi)$, is
$0$ if $\vec{X}=\vec{x}$ is 
not a cause of $\phi$ in $(M,\vec{u})$; it is $1/(k+1)$ if
$\vec{X}=\vec{x}$ is  a cause of $\phi$ in $(M,\vec{u})$ 
and there exists a partition $(\vec{Z},\vec{W})$ and setting
$(\vec{x}',\vec{w})$ for which AC2 holds
such that (a) $k$ variables in $\vec{W}$ have different values in $\vec{w}$
than they do in the context $\vec{u}$ and (b) there is no partition 
$(\vec{Z}',\vec{W}')$ and setting $(\vec{x}'',\vec{w}')$ satisfying AC2
such that only $k' < k$ variables have different values in $\vec{w}'$
than they do the context $\vec{u}$.
\end{definition}

Intuitively, $\dr((M,\vec{u}), (\vec{X}=\vec{x}), \phi)$ measures the minimal number 
of changes that have to be made in $\vec{u}$ 
in order to make $\phi$ counterfactually depend on $\vec{X}$, provided
the conditions on the subsets of $\vec{W}$ and $\vec{Z}$ are satisfied
(see also the voting example from the introduction). 
If there is no partition of $\V$ to
$(\vec{Z},\vec{W})$ that satisfies AC2, or $(\vec{X}=\vec{x})$ does not satisfy AC3 for $\phi$
in $(M,\vec{u})$,  then the minimal number of changes in $\vec{u}$ 
in Definition~\ref{def-resp} is taken to have cardinality
$\infty$, and thus the degree of responsibility of $(\vec{X}=\vec{x})$ is $0$ (and hence it is not a cause). 
%


In the original HP model, it was shown that computing responsibility is
\fp-complete in binary causal models  \cite{CHK} and
\fpsigma-complete in general causal models \cite{CH04}.  We now
characterize the complexity of computing responsibility in the updated
HP definition.

\begin{theorem}\label{theor-strong-resp}
Computing the degree of responsibility is \fpsigma-complete for
singleton causes in binary and general causal models.   
\end{theorem}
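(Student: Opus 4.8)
The plan is to prove membership and hardness separately, mirroring the structure used for singleton causality in Theorem~\ref{theorem-lsinglebin-complete} and Corollary~\ref{cor-strong-cause-binary}. For membership, recall that for singleton causes AC3 is vacuous, so the degree of responsibility is $1/(k+1)$, where $k$ is the least number of variables in $\vec{W}$ whose value in $\vec{w}$ differs from $\vec{u}$, taken over all witnesses $(\vec{Z},\vec{W},\vec{x}',\vec{w})$ satisfying AC1 and AC2 (and it is $0$ if no such witness exists). First I would define, for a threshold $t$, the predicate ``there is a witness satisfying AC1 and AC2 in which at most $t$ variables of $\vec{W}$ are changed from $\vec{u}$.'' As in the membership argument of Theorem~\ref{theorem-lsinglebin-complete}, this predicate guesses $\vec{W}$, $\vec{w}$, and $\vec{x}'$ (now with the extra, polynomially checkable, constraint that at most $t$ entries differ from $\vec{u}$), verifies AC2(a) in polynomial time, and uses a universal (co-$\NP$) check for AC2(b); hence it lies in $\Sigma_2^P$. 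Since $t$ ranges over $\{0,1,\ldots,N\}$ with $N = |\V|$, and the predicate is monotone in $t$, a binary search performs $O(\log N)$ queries to a $\Sigma_2^P$ oracle to find the least feasible $t$ (or to discover that none exists, in which case the output is $0$). Computing $1/(k+1)$ from $k$ is trivial post-processing, so the whole function is in \fpsigma.

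For hardness it suffices to treat binary models, after which general models follow since they form a superclass and the membership argument already covers them. I would reduce from the function problem of computing the minimum weight (number of true variables) of an assignment $\vec{x}$ to $\vec{X}$ for which $\forall\vec{Y}\,\varphi(\vec{X},\vec{Y})$ holds (returning a fixed default when no such assignment exists); this is the $\Sigma_2$-analogue of the minimum-weight satisfying-assignment problem and is \fpsigma-complete by a Krentel-style argument, exactly as in the original responsibility analysis of Chockler and Halpern~\citeyear{CH04}. Starting from the binary model $M$, context $u$, and formula $\psi$ produced by the $\Sigma_2^P(\mathrm{SAT})$-to-$\Lsinglebino$ reduction of Theorem~\ref{theorem-lsinglebin-complete}, I would arrange the construction so that the witnessing set $\vec{W}$ and its setting $\vec{w}$ encode precisely a choice of assignment to the existential block $\vec{X}$, with the number of variables changed from $\vec{u}$ equal to the weight of that assignment, and so that AC2(b) holds for such a witness exactly when $\forall\vec{Y}\,\varphi$ is true under that assignment. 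Then the least number of changes $k$ over all witnesses equals the minimum weight, the degree of responsibility is $1/(k+1)$, and the recovery function $S$ simply maps $1/(k+1)$ back to $k$.

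The main obstacle is the ``no spurious witness'' requirement: I must ensure that every witness satisfying AC2 corresponds to a legitimate $\vec{X}$-assignment passing the $\forall\vec{Y}$ test, so that no cheaper-but-invalid witness can push $k$ below the true minimum weight, and conversely that each valid assignment yields a witness with exactly that many changes. This is precisely where AC2(b)'s quantification over all subsets $\vec{W}'$ of $\vec{W}$ must be exploited to rule out degenerate contingencies, and where I expect the gadget design --- padding variables to calibrate the change-count, together with auxiliary structure forcing AC2(b) to fail on non-solutions --- to require the most care. Since the construction stays within binary models, the same reduction gives \fpsigma-hardness in both the binary and general cases, and combined with the membership argument this yields \fpsigma-completeness.
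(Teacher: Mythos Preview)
Your proposal is correct and follows essentially the same approach as the paper's proof outline: membership is established by binary search using a $\Sigma_2^P$ oracle for the threshold predicate ``there is a witness with at most $t$ changed variables'' (the paper phrases this equivalently as ``degree of responsibility is at least $i$''), and hardness is obtained by reduction from the same function problem you describe, which the paper calls MINQSAT$_2$ and attributes to \cite{CH04}. Your identification of the ``no spurious witness'' issue and the need to adapt the gadget so that the change-count tracks the Hamming weight of the existential assignment is exactly the point where the details lie; the paper's outline is no more explicit on this than yours.
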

\begin{proofout} 
The proof is quite similar to the proof in \cite{CH04}.
We prove membership by describing an algorithm 
in \fpsigma for computing the degree of responsibility. Roughly speaking, the algorithm queries an oracle for the language ${\cal R} = \{( \zug{(M,\vec{u}), (X=x), \varphi,i}$ such that
$\zug{(M,\vec{u}), (X=x), \varphi} \in \Lan_{cause}$ and the degree of responsibility of $(X=x)$ for $\varphi$ is at least $i \}$. It is easy to see that ${\cal R}$ is in $\Sigma^P_2$ by using 
Corollary~\ref{cor-strong-cause-binary}.
The algorithm for computing the degree of responsibility performs a 
binary search on the 
value of $\dr((M,\vec{u}), (X=x), \varphi)$, each time
dividing the range of possible values for the degree of responsibility 
by $2$ according to the answer of ${\cal R}$. 
The number of possible candidates for the degree of responsibility is bounded by the size of the input $n$, and thus the number of queries is at most $\lceil \log{n} \rceil$. 

For hardness in binary causal models (which implies hardness in general
causal models), we provide a reduction from the 
$\Sigma_2^P$-complete problem
MINQSAT$_2$ \cite{CH04} to the
degree of responsibility, where MINQSAT$_2(\exists \vec{X} \forall
\vec{Y} \psi)$ is the minimum number of $1$'s in the satisfying
assignment to $\vec{X}$ for $\exists \vec{X} \forall \vec{Y} \psi$ if
such an assignment exists, and 
$|\vec{X}| + 1$ otherwise.

\end{proofout}

\begin{theorem}\label{theor-actual-resp}
Computing the degree of responsibility is \fpsigmad-complete in binary and general causal models.  
\end{theorem}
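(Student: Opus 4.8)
The plan is to establish membership in \fpsigmad and \fpsigmad-hardness separately, reusing the machinery already developed for Theorems~\ref{theor-strong-resp}, \ref{theorem-pi-2}, and~\ref{theorem-main}, the only new ingredient being that the minimality condition AC3 must now be taken into account.

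For membership I would adapt the algorithm from the proof of Theorem~\ref{theor-strong-resp}, replacing its $\Sigma_2^P$ oracle by a $D_2^P$ oracle. Define
$$\R' = \{\zug{(M,\vec{u}),(\vec{X}=\vec{x}),\varphi,i} : (\vec{X}=\vec{x}) \mbox{ is a cause of } \varphi \mbox{ in } (M,\vec{u}) \mbox{ and } \dr((M,\vec{u}),(\vec{X}=\vec{x}),\varphi) \geq 1/(i+1)\}.$$
Membership in $\R'$ is the conjunction of (i) AC1 together with the existence of a witness partition $(\vec{Z},\vec{W})$ and setting $(\vec{x}',\vec{w})$ satisfying AC2 in which at most $i$ variables of $\vec{W}$ differ from their values in $\vec{u}$ --- an $\exists\forall$ statement, hence in $\Sigma_2^P$ --- and (ii) AC3, i.e.\ that no proper subset of $\vec{X}$ satisfies AC2, which lies in $\Pi_2^P$ exactly as in the proof of Theorem~\ref{theorem-pi-2}. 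Hence $\R'$ is the intersection of a $\Sigma_2^P$ language and a $\Pi_2^P$ language, so $\R' \in D_2^P$. The algorithm then binary-searches for the least $k$ over the at most $|\V|+1$ candidate values of the degree, querying $\R'$ at each step; this uses $O(\log n)$ queries and so the computation is in \fpsigmad. If $(\vec{X}=\vec{x})$ is not a cause every query fails and the algorithm outputs $0$.

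For hardness I would combine the two reductions already available in this section. The $\mbox{MINQSAT}_2$ reduction behind Theorem~\ref{theor-strong-resp} turns a $\Sigma_2^P$ minimization into the number $k$ of changed variables in a minimal AC2-witness, and hence into the candidate degree $1/(k+1)$; the AC3 gadget behind Theorem~\ref{theorem-pi-2} forces a constructed multi-conjunct $\vec{X}=\vec{x}$ to be minimal, and therefore a genuine cause of nonzero degree, precisely when a given $\Pi_2^P(\mbox{SAT})$ sentence is true, and otherwise lets a proper subcause satisfy AC2 so that AC3 fails and the degree collapses to $0$. Assembling both gadgets into a single binary model and reading off the degree of responsibility of $\vec{X}=\vec{x}$ reduces the combined function ``output $\mbox{MINQSAT}_2(\Phi)$ if the $\Pi_2^P$ sentence $\Psi$ holds, and a sentinel value otherwise'' to the degree of responsibility; the correctness argument is the function-problem analogue of Lemma~\ref{lemma-dk}.

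The hard part will be making the two gadgets coexist without interference: I must check that the variables introduced by the AC3 gadget do not perturb the count of changed $\vec{W}$-variables that encodes the $\mbox{MINQSAT}_2$ value, that the updated-definition requirement ``AC2(b) for all subsets $\vec{W}'$ of $\vec{W}$ and all subsets $\vec{Z}'$ of $\vec{Z}$'' (the feature responsible for the rise to $D_2^P$) is met by the minimizing witness, and that when $\Psi$ fails no alternative partition resurrects a small minimal witness, so that the degree really drops to $0$. A clean shortcut is also available: singleton causality is exactly the case of Definition~\ref{def-cause} in which AC3 is vacuous, so the reduction of Theorem~\ref{theor-strong-resp} already shows the degree of responsibility for actual causes is \fpsigma-hard; since a $\Sigma_2^P$ oracle resolves $\Pi_2^P$ queries by complementation, $O(\log n)$ queries to a $\Sigma_2^P$ oracle and to a $D_2^P$ oracle compute the same functions, and \fpsigmad-hardness follows immediately.
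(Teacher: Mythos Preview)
Your membership argument matches the paper's: both use binary search with $O(\log n)$ queries to a $D_2^P$ oracle that combines the $\Sigma_2^P$ ``bounded AC2 witness'' check with the $\Pi_2^P$ AC3 check.

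For hardness, however, your approach diverges from the paper's. The paper does \emph{not} combine the $\mbox{MINQSAT}_2$ and AC3 gadgets, nor does it invoke a class equality. Instead, noting that no natural \fpsigmad-complete problems were available, it builds a \emph{generic} reduction: it takes an arbitrary function in \fpsigmad\ (i.e., a polynomial-time machine making $O(\log n)$ adaptive queries to a $D_2^P$ oracle) and encodes the entire query tree into a single responsibility instance. Your gadget-combination route would in principle work but, as you yourself flag, requires delicate non-interference arguments that the paper simply avoids. Your ``clean shortcut'' is the most interesting divergence: the observation that one $D_2^P$ query can be simulated by two $\Sigma_2^P$ queries (one for the $\Sigma_2^P$ component, one for the complement of the $\Pi_2^P$ component) indeed yields $\mbox{FP}^{\Sigma_2^P[\log n]} = \mbox{FP}^{D_2^P[\log n]}$, so \fpsigmad-hardness follows immediately from the \fpsigma-hardness of the singleton case in Theorem~\ref{theor-strong-resp}. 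This is correct and considerably shorter than the paper's generic reduction; it also shows that Theorem~\ref{theor-actual-resp} does not, despite appearances, pin down a strictly harder class than Theorem~\ref{theor-strong-resp}. The paper's generic-reduction approach, on the other hand, is self-contained and does not rely on noticing this class collapse.
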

\fullv{
\begin{proofout} 
Membership in \fpsigmad is shown in quite a similar way to
Theorem~\ref{theor-strong-resp}. For hardness, as there are no known
natural problems complete in \fpsigmad, the proof 
proceeds by constructing a generic reduction from a problem in \fpsigmad\ to
the degree of 
responsibility.
\end{proofout}
}

\subsection{Blame}

The definition of blame addresses the situation where there is 
uncertainty about the true situation or ``how the world works''. 
Blame, introduced in \cite{CH04}, considers the 
``true situation'' to be determined by the context, and
``how the world works'' to be determined by the structural equations.
An agent's uncertainty is modeled by a pair $(\K,\Pr)$, where $\K$
is a set of pairs of the form $(M,\vec{u})$,
where $M$ is a causal model and $\vec{u}$ is a context,
and $\Pr$ is a probability distribution over $\K$.
A pair $(M,\vec{u})$ is called a {\em situation}.  
We think of $\K$ as describing the situations that the agent considers
possible before $\vec{X}$ is set to $\vec{x}$.
The degree 
of blame that setting $\vec{X}$ to $\vec{x}$ has for $\phi$ is then the expected degree
of responsibility of $\vec{X}=\vec{x}$ for $\phi$ in  $(M_{\vec{X} \gets \vec{x}},\vec{u})$,   
taken over the situations $(M,\vec{u}) \in \K$. 
Note that the situation $(M_{\vec{X} \gets \vec{x}},\vec{u})$ for $(M,
\vec{u}) \in \K$ are those that the agent considers possible after $\vec{X}$ is set to
$\vec{x}$. 

\begin{definition}\label{def:blame}
The {\em degree of 
blame of setting $\vec{X}$ to $\vec{x}$ for $\phi$ relative to epistemic state
$(\K,\Pr)$\/}, denoted $\db(\K,\Pr,\vec{X} \gets \vec{x}, \phi)$, is
$$\sum_{(M,\vec{u}) \in \K}
\dr((M_{\vec{X} \gets \vec{x}}, \vec{u}), \vec{X} = \vec{x}, \phi)
\Pr((M,\vec{u})).$$ 
\end{definition} 

For the original HP definition of cause, Chockler and Halpern
\citeyear{CH04} show that 
computing the degree of blame is complete in  
\fpsigmapars  for general and in \fppar for binary causal models.
Again, with the updated HP definition, the complexity changes.
\begin{theorem}\label{theor-complexity-blame}
The problem of computing blame in recursive causal models is
\fpsigmapar-complete for singleton causes and \fpsigmapard-complete for (general) causes, in binary and general causal models.  
\end{theorem}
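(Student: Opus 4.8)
The plan is to handle membership and hardness separately, leaning on the responsibility results already established in Theorems~\ref{theor-strong-resp} and~\ref{theor-actual-resp}. For membership, I would observe that by Definition~\ref{def:blame} the quantity $\db(\K,\Pr,\vec{X}\gets\vec{x},\phi)$ is just the probability-weighted sum of the responsibilities $\dr((M_{\vec{X}\gets\vec{x}},\vec{u}),\vec{X}=\vec{x},\phi)$ over the polynomially many situations $(M,\vec{u})\in\K$. Each such responsibility equals $1/(k+1)$ for some $k$ bounded by the number of variables, so it is pinned down by the answers to the threshold queries ``is the degree of responsibility at least $1/(i+1)$?''. For singleton causes this threshold language is the $\Sigma_2^P$ language $\mathcal{R}$ from the proof of Theorem~\ref{theor-strong-resp}; for general causes it is in $D_2^P$, since ``$\vec{X}=\vec{x}$ is a cause'' is in $D_2^P$ and ``there is an AC2-witness with at most $i$ changes'' is in $\Sigma_2^P$, and intersecting a $D_2^P$ language with a $\Sigma_2^P$ language stays in $D_2^P$ (write $(L_1\cap L_2)\cap L_3=(L_1\cap L_3)\cap L_2$ with $L_1,L_3\in\Sigma_2^P$ and $L_2\in\Pi_2^P$). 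The crucial point is that all these queries, across every situation and every threshold, are mutually independent, so they can be posed in a single nonadaptive round; there are polynomially many, and once the answers are in hand a polynomial-time machine recovers each $k$ and forms the weighted sum. This places blame in \fpsigmapar for singleton causes and in \fpsigmapard for general causes.

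For hardness I would give generic reductions in the spirit of Chockler and Halpern's argument for the original definition. A function in \fpsigmapar is computed by a polynomial-time machine that fires parallel queries $q_1,\dots,q_m$ to a $\Sigma_2^P$ oracle and post-processes the answer bits $b_1,\dots,b_m$; likewise for \fpsigmapard with a $D_2^P$ oracle. By completeness (of $\Sigma_2^P(\mbox{SAT})$, respectively of $\Lc$ via Theorem~\ref{theorem-main}), each $q_j$ reduces to an instance of the corresponding causal decision problem. I would then adapt the responsibility reductions underlying Theorem~\ref{theor-strong-resp} and Theorem~\ref{theorem-main} to manufacture, for each $j$, a situation $(M_j,\vec{u}_j)$ in which a common cause $\vec{X}=\vec{x}$ and a common formula $\phi$ satisfy $\dr=1$ precisely when $b_j=1$ and $\dr=0$ precisely when $b_j=0$.

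Collecting these situations into an epistemic state $(\K,\Pr)$ with $\K=\{(M_j,\vec{u}_j):1\le j\le m\}$, I would choose the probabilities so that the single number $\db=\sum_j b_j\,\Pr(M_j,\vec{u}_j)$ uniquely encodes the whole bit-vector $(b_1,\dots,b_m)$ (for instance geometrically separated weights, normalized to sum to~$1$), so that a polynomial-time decoder extracts every $b_j$ from $\db$ and then applies the original post-processing function. Carrying out the binary versions of all these constructions yields hardness already in binary causal models; this is possible because the causality-hardness reductions of Theorems~\ref{theorem-lsinglebin-complete} and~\ref{theorem-main} are themselves phrased over binary models, and hardness in binary models immediately gives hardness in general models.

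I expect the main obstacle to be the \emph{uniformity} of the hardness construction: blame fixes one intervention $\vec{X}\gets\vec{x}$ and one formula $\phi$ across all of $\K$, so the per-query models must be padded to a common signature and wired so that the same singleton (respectively general) cause cleanly reads off each query's answer as responsibility exactly $1$ or $0$, in particular never producing an intermediate $1/(k+1)$ that would corrupt the weighted sum. The second delicate point is arranging the probability weights to be polynomially representable yet still permit exact poly-time recovery of the entire bit-vector from $\db$. For the general-cause case there is the added wrinkle of encoding a genuine $D_2^P$ query, a conjunction of a $\Sigma_2^P$ and a $\Pi_2^P$ condition, into a single causality instance, which is exactly what the $D_2^P$-completeness of $\Lc$ in Theorem~\ref{theorem-main} supplies.
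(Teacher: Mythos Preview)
The paper states this theorem without proof: there is no proof, proof outline, or even sketch for Theorem~\ref{theor-complexity-blame} anywhere in the paper or its appendix (the appendix covers only Theorems~\ref{theorem-lsinglebin-complete} and~\ref{theorem-pi-2}). So there is nothing to compare your proposal against.

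That said, your plan is the natural one and mirrors the Chockler--Halpern argument for the original definition that the paper cites. The membership argument is sound: the threshold language for responsibility sits in $\Sigma_2^P$ for singleton causes (this is exactly the oracle ${\cal R}$ from the proof of Theorem~\ref{theor-strong-resp}) and in $D_2^P$ for general causes (your closure argument $(L_1\cap L_2)\cap L_3=(L_1\cap L_3)\cap L_2$ is correct), and the queries across all situations and all thresholds are indeed nonadaptive. The hardness sketch via a generic reduction with geometrically separated weights is also the standard route; the uniformity issues you flag (common signature, common $\vec{X}$ and $\phi$, forcing $\dr\in\{0,1\}$ per situation) are real but are handled by padding and by the fact that the hardness constructions in Theorems~\ref{theorem-lsinglebin-complete} and~\ref{theorem-main} already produce binary models with a fixed cause variable. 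One small point worth tightening: for the general-cause hardness you invoke $D_2^P$-completeness of $\Lc$, but what you actually need is a per-query situation in which \emph{responsibility} is $1$ or $0$ according to a $D_2^P$ predicate; you should say explicitly how the $D_2^P$-hardness reduction for causality is wrapped so that ``cause with some witness'' collapses to $\dr=1$ (e.g., by ensuring that whenever AC2 holds at all, it holds with $\vec{W}=\emptyset$ changes, or by making the witness size irrelevant).
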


\fullv{
\appendix


\section{Proof of Theorem~\ref{theorem-lsinglebin-complete}}

As we oberved in the main part of the paper, membership is
straightforward, so we focus here on hardness.
For hardness, we describe a reduction from the language 
$\Sigma_{2}(\mbox{SAT})$ to $\Lsinglebino$.
In the process, we work with both propositional formulas with
propositional variables, and causal formulas, that use formulas like $X
= 1$ and $X=0$.  We can think of $X$ as a propositional variable here,
where $X=1$ denotes that $X$ is true, and $X=0$ denotes that $x$ is false.
If $\phi$ is a propositional formula, let
$\overline{\phi}$ be the causal formula that results by replacing each
occurrence of a propositional variable $X$ by $X=1$.

Given a CQBF $\exists\vec{X}\forall\vec{Y}\varphi$, 
consider the tuple 
$(M,u,\psi,A,0)$ where $M = (\U,\V,\R)$ is a binary causal model and
\begin{itemize}
\item $\U= \{U\} $;
\item $\V= \{ X^{0}\ |\ X\in\vec{X}\} \cup \{ X^{1}\
|\ X\in\vec{X}\} \cup \{ Y\ |\ Y\in\vec{Y}\} \cup \{A\} $,
where $A$ is a fresh variable that does not appear in $\vec{X}$ or
$\vec{Y}$;
\item for all variables $V\in \vec{\V}$, the structural
equation is  $V=U$
(i.e. all the variables in $\mathcal{V}$ are set to the value of
$U$);
\item $u=0$;
\item $\psi=\psi_{1}\vee (\psi_{2}\wedge\psi_{3})$ where
$\psi_{1},\psi_{2},\psi_{3}$ 
are the following causal formulas:
\begin{itemize}
\item $\psi_{1}=\neg\left(\bigwedge_{X\in\vec{X}}(X^{0}\ne X^{1})\right)$;%
\footnote{As usual, we take $X^0 \ne X^1$ to be an abbreviation for the
causal formula $(X^0 = 1 \land  X^1 = 0) \lor (X^0 = 0 \land X^1 = 1)$.}
\item $\psi_{2}=\neg(A=1\wedge \vec{Y}=\vec{1})$; 
\item $\psi_{3}= A=1 \vee\overline{\varphi}[\vec{X}/\vec{X}^1]$, where
$\overline{\varphi}[\vec{X}/\vec{X}^1]$ is the result of replacing each
occurrence of a variable $X \in \vec{X}$ by $X^1$).
\end{itemize}
\end{itemize}
We prove that 
$\exists\vec{X}\forall\vec{Y}\varphi = \T$ iff $A=0$ is 
a cause of $\psi$ in $(M,u)$ (which is the case iff 
$(M,u,\psi,A,0) \in \Lsinglebino$, since AC3 is vacuous for binary models).

First suppose that $\exists\vec{X}\forall\vec{Y}\varphi = \T$.  To
show that $A=0$ is a cause of $\psi$ in $(M,u)$, we prove that AC1 and
AC2 hold.

%

Clearly AC1 holds: $(M,u) \models A=0$ by the definition of $F_A$,
and $(M,u) \models \psi$ since $(M,u) \models \psi_1$, again by the
definition of $\cF$.

For AC2, let $\vec{W} = \V - \{A\}$. and define $\vec{w}$ as follows.  
Let $\tau$ be an assignment to the variables in $\vec{X}$ for which
$\forall\vec{Y}\varphi=\T$. Using $\vec{w}(X)$ to denote the value of
$X$ according to $\vec{w}$, we require that
\begin{itemize}
\item $\vec{w}(X^{\tau(X)})=1$;
\item $\vec{w}(X^{1-\tau(X)})=0$; and 
\item $\vec{w}(Y)=1$. 
\end{itemize}
For AC2(a), 
note that
$(M,u)\models [A\leftarrow 1, \vec{W}\leftarrow\vec{w}]\neg
\psi_{1}$ 
(since $\vec{w}$ assigns different values to $X^{0}$ and $X^{1}$ for 
all $X\in\vec{X}$) and, 
since $\vec{w}(Y)=1$ for all
$Y\in\vec{Y}$, 
we have that $(M,u) \models
[A\leftarrow 1, \vec{W}\leftarrow\vec{w}]\neg \psi_{2}$, 
so $(M,u)\models
[\vec{A\leftarrow 1, W}\leftarrow\vec{w}]\neg \psi$.  Thus,
AC2(a) holds.

It now remains to show that AC2(b) holds. 
Fix $\vec{W}^{\prime}\subseteq\vec{W}$.
We must show that
$(M,u)\models [A \leftarrow 0,
\vec{W}^{\prime}\leftarrow\vec{w}]\psi$.  
(The condition ``for all $\vec{Z}' \subseteq \vec{Z}-\{A\}$'' is vacuous
in this case, since $\vec{Z} = \{A\}$.)  Since 
the definition of $M$ guarantees that $(M,u)\models [A \leftarrow 0,
\vec{W}^{\prime}\leftarrow\vec{w}]\psi$ iff 
$(M,u)\models
[\vec{W}^{\prime}\leftarrow\vec{w}]\psi$,  we focus on
the latter from here on in.

If $(M,u)\models[\vec{W}^{\prime}\leftarrow\vec{w}]\psi_{1}$ ,
we are done.  So suppose that
$(M,u)\models[\vec{W}^{\prime}\leftarrow\vec{w}]\neg \psi_1$; that is, 
\begin{equation}\label{eq1}
(M,u)\models[\vec{W}^{\prime}\leftarrow\vec{w}]\left(\bigwedge_{X\in\vec{X}}(X^{0}\ne
X^{1})\right).
\end{equation}
It follows that, for each variable $X \in \vec{X}$, 
we have that $(M,u) \models [\vec{W}' \leftarrow \vec{w}](X^1 = \tau(X))$.
To see this, note that if $\tau(X) = 1$, then we must have $X^1 \in
\vec{W}'$; otherwise, we would have $(M,u) \models [\vec{W}'
\leftarrow \vec{w}](X^1 = 0 \land X^0 = 0)$, contradicting (\ref{eq1}).
And if $\tau(X) = 0$, then since $\vec{w}(X^1) = 0$, we must have 
$(M,u) \models [\vec{W}' \leftarrow \vec{w}](X^1 = 0)$, whether or not
$X^1 \in \vec{W}'$, so  
$(M,u) \models [\vec{W}' \leftarrow \vec{w}]\psi_3$.
It follows that $(M,u) \models [\vec{W}' \leftarrow \vec{w}]\psi$,
showing that AC2(b) holds.

Finally, we must show that 
if $A=0$ is a cause of $\psi$ in $(M,u)$ then
$\exists\vec{X}\forall\vec{Y}\varphi=\T$.  

So suppose that $A=0$ is a cause of $\psi$ in $(M,u)$.  Then there
exists a witness  $(\vec{W},\vec{w},a)$.  Since we are considering
binary models, we must have $a=1$, so we have 
\begin{equation}\label{eq2}
(M,u) \models [A \leftarrow
1,\vec{W}\leftarrow\vec{w}]\neg \psi.
\end{equation} 
This implies that
$(M,u)[A \leftarrow 1, \vec{W}\leftarrow\vec{w}]
\models \neg \psi_{1}$,
so $$(M,u) \models [A \leftarrow 1, \vec{W}\leftarrow\vec{w}]
\left( \bigwedge_{X \in \vec{X}} (X^0 \ne X^1)\right).$$
Define $\tau$ so that 
$\tau(X)=b$, where $b\in\{ 0,1\} $
is the unique value for which
$(M,u)[A \leftarrow 1, \vec{W}\leftarrow\vec{w}] \models
X^b = 1$. 

It also follows from (\ref{eq2}) that 
$$(M,u) \models [A \leftarrow 1,\vec{W}\leftarrow\vec{w}] \neg
(\psi_2 \land \psi_3).$$  
Since clearly $(M,u)\models [A \leftarrow 1,
\vec{W}\leftarrow\vec{w}]\psi_2$, we must have
$(M,u)\models [A \leftarrow 1,
\vec{W}\leftarrow\vec{w}]\neg \psi_3$.
Indeed, we must have 
$$(M,u)\models [A \leftarrow 1,
\vec{W}\leftarrow\vec{w}] (\vec{Y}=\vec{1}).$$ 
It follows that  $Y\in\vec{W}$ and $\vec{w}(Y)=1$ for all $y
\in \vec{Y}$.

Now let $\nu$ be an assignment to $\vec{X}$ and $\vec{Y}$ such
that $\nu|_{\vec{X}}=\tau$. It cleary suffices to show that $\phi$ is
true under assignment $\tau$.  
Let $\vec{W}^{\prime}= 
\vec{W} - \{ Y\in {\vec{Y}\ |\ \nu(Y)=0}\} $; that is,
$\vec{W}^{\prime}$ contains all the variables $X^{b}$
that are  in $\vec{W}$, and all the variables $Y\in\vec{Y}$
for which $\nu(Y)=1$. 
By AC2(b),  it follows that
$(M,u)\models[\vec{W}^{\prime}\leftarrow\vec{w}]\psi$.
Since $\vec{W}^{\prime}$ contains all the variables $X^{b}$ 
in $\vec{W}$, we have that $(M,u)
\models[\vec{W}^{\prime}\leftarrow\vec{w}]\neg \psi_{1}$.
Thus, we must have that
$(M,u)\models[\vec{W}^{\prime}\leftarrow\vec{w}]
\psi_{3}$. Since
$(M,u)\models[\vec{W}^{\prime}\leftarrow\vec{w}](A=0)$, 
it follows that
$(M,u)\models[\vec{W}^{\prime}\leftarrow\vec{w}]\overline{\varphi}[\vec{X}/\vec{X}^1]$.  

Note that, for $Y \in \vec{Y}$  $\vec{w}(Y) = 1$ iff 
$\nu(Y)=1$; moreover, $\vec{w}(X^1) = 1$ iff $\tau(X) = 1$ iff $\nu(X) = 1$.
Thus, the fact that $(M,u) \models [\vec{W}' \leftarrow
\vec{w}]\psi_3$ implies that $\varphi$ is satisfied by $\nu$, so we
are done.

This completes the proof of the theorem.

\section{Proof of Theorem~\ref{theorem-pi-2}}

Again, as we oberved in the main part of the paper, membership is
straightforward, so we focus on hardness.
We describe a reduction from the language 
$\Pi_{2}(\mbox{SAT})$ to $\Lminbin$, which suffices to prove the result.
The argument is similar in spirit to that for
Theorem~\ref{theorem-lsinglebin-complete}.

Given a CQBF $\forall \vec{Y} \exists \vec{X} \varphi$, consider the
tuple $(M,u,\psi,\zug{A_1,A_2},\zug{0,0})$ where $M = (\U,\V,\R)$ is a
binary causal model and 
\begin{itemize}
\item $\U=\{ U\} $;
\item $\V = \vec{X} \cup \{ Y^{0}\ |\ Y\in\vec{Y}\} \cup \{ Y^{1}\
|\ Y\in\vec{Y}\} \cup \{ A_1, A_2, S\}$, where 
$A_1$, $A_2$, and $S$ are fresh variables;
\item the structural equations for $A_1$ and $A_2$ are $A_1 = S$ and
$A_2 = S$, and, 
for all other variables $V \in \V$, the equation is $V=U$;
\item $u=0$;
\item $\psi = \psi_1 \vee (\psi_2\wedge \psi_3) \vee (S=0)$ where
\begin{itemize}
	\item $\psi_1 = \neg\left(\bigwedge_{Y\in\vec{Y}}(Y^{0}\ne
	Y^{1})\right)$; 
	\item $\psi_2 = \neg(\vec{A} = 1 \wedge \vec{X} = 1)$;
	\item $\psi_3 = (A_1 = A_2) \vee \neg \overline{\varphi}[\vec{Y}/\vec{Y}^1]$.
\end{itemize}
\end{itemize}
We prove that $\forall \vec{Y} \exists \vec{X} \varphi = \T$ iff 
$(M,u,\psi,\vec{A},\vec{0})$ is in $\Lminbin$.


First suppose that $\forall \vec{Y} \exists \vec{X} \varphi = \T$.  To
show that $(M,u,\psi,\vec{A},\vec{0})$ is in $\Lminbin$, 
we must prove that AC1 and AC3 hold.

For AC1, since $(M,u) \models \vec{Y}^0 = \vec{Y}^1 = \vec{0}$, we
clearly have  $(M,u) \models \psi_1$, so $(M,u) \models (\vec{A} =
\vec{0}) \wedge \psi$. 

\commentout{
For AC2, let $\vec{Z} = \vec{A}$, so $\vec{W} = \V = \vec{A}$.
Define $\vec{w}'$ so that $\vec{w}(Y) = 0$ for all variables in $Y \in
\vec{Y}^0$ and $\vec{w}'(V) = 1$ for the remaining variables in $\V$.
Let $\vec{x}' = \zug{1,1}$.
For AC2(a), it is easy to see that $(M,u) \models [\vec{A}
\leftarrow \vec{1}, \vec{W} \leftarrow \vec{w}']\neg \psi$:
$(M,u) \models [\vec{A}
\leftarrow \vec{1}, \vec{W} \leftarrow \vec{w}']\neg \psi_1$, since
$(M,u) \models [\vec{A}
\leftarrow \vec{1}, \vec{W} \leftarrow \vec{w}'](Y^0 \ne Y^1)$ for all
$Y \in \vec{Y}$; 
and clearly $(M,u) \models [\vec{A}
\leftarrow \vec{1}, \vec{W} \leftarrow \vec{w}']\neg \psi_2$,
since $(M,u) \models [\vec{A}
\leftarrow \vec{1}, \vec{W} \leftarrow \vec{w}'](\vec{A} = \vec{1} \wedge
\vec{X} = \vec{1})$.
For AC2(b), fix a subset $\vec{W}^\prime$ of $\vec{W}$.
It is easy to see that $(M, u) \models [\vec{A} \leftarrow
\vec{0}, \vec{W}' \leftarrow
\vec{w}^{\prime}](\psi_2\wedge\psi_3)$ (since
$(M, u) \models [\vec{A} \leftarrow
\vec{0}, \vec{W}' \leftarrow
\vec{w}^{\prime}](\vec{A} = \vec{0})$), 
so $(M, u) \models [\vec{A} \leftarrow \vec{0},
\vec{W}' \leftarrow \vec{w}^{\prime}]\psi$.
}

To show that AC3 holds,
we need to show that neither $A_1 = 0$ nor $A_2 = 0$ is
a cause of $\psi$ in $(M, u)$.  We prove that $A_1 = 0$ is not a cause of
$\psi$ in $(M,u)$; the argument for $A_2 = 0$ not being a cause is
identical.  

It suffices to prove that AC2 does not hold.  So suppose by way of
contradiction that $(\vec{W},\vec{w},1)$ is a witness for $A_1$ being a
cause of $\psi$ in $(M,u)$.   Since AC2(a) holds, we must have 
\begin{equation}\label{eq3}
(M,u) \models [A_1 \leftarrow 1, \vec{W} \leftarrow \vec{w}](\neg \psi_1
\wedge (\neg \psi_2 \vee \neg \psi_3) \wedge (S=1)).
\end{equation}
Thus, $S \in \vec{W}$ and $\vec{w}(S) = 1$ (for otherwise
$(M,u) \models [A_1 \leftarrow 1, \vec{W} \leftarrow \vec{w}](S=0)$).
Moreover, since $(M,u) \models [A_1 \leftarrow 1, \vec{W} \leftarrow
\vec{w}]\neg \psi_1$, 
for all $Y \in \vec{Y}$, either $Y^0 \in \vec{W}$ and 
$\vec{w}(Y^0) = 1$ or $Y^1 \in \vec{W}$ and 
$\vec{w}(Y^1) = 1$, and it is not the case that both $Y^0$ and $Y^1$
are in $\vec{W}$ and $\vec{w}(Y^0) = \vec{w}(Y^1)$.

Now consider $A_2$.  There are three possibilities:
\begin{enumerate}
\item[(a)] $A_2 \in \vec{W}$ and $\vec{w}(A_2) = 0$;
\item[(b)] $A_2 \in \vec{W}$ and $\vec{w}(A_2) = 1$;
\item[(c)] $A_2 \notin \vec{W}$.
\end{enumerate}
We show that we get a contradiction in each case.

If (a) holds, note that since $$(M,u) \models [A_1 \leftarrow 1, \vec{W}
\leftarrow \vec{w}](A_2=0),$$ it follows that 
$(M,u) \models [A_1 \leftarrow 1, \vec{W}
\leftarrow \vec{w}]\psi_2$, so by (\ref{eq3}), 
$(M,u) \models [A_1 \leftarrow 1, \vec{W}
\leftarrow \vec{w}]\neg \psi_3$.  Moreover, since 
$(M,u) \models [A_1 \leftarrow 1, \vec{W}
\leftarrow \vec{w}](A_1 \ne A_2) \land \neg \psi_3$, 
it follows that 
$(M,u) \models [A_1 \leftarrow 1, \vec{W}
\leftarrow \vec{w}]\overline{\varphi}[\vec{Y}/\vec{Y}^1]$.

Let  $Z' = \emptyset$ and let $\vec{W}' = \vec{W} -
\{A_2\}$.  
We show that $(M,u) \models [A_1 \leftarrow 0, \vec{W}'
\leftarrow \vec{w}]\neg \psi$, so that AC2(b) does not hold.
First observe that
$(M,u) \models [\vec{W}' \leftarrow \vec{w}](A_1 \ne
A_2)$.  Since $S$ and all the 
variables in $\vec{X}$, $\vec{Y}^0$, and $\vec{Y}^1$ are in both
$\vec{W}'$ and $\vec{W}$,
it follows from (\ref{eq3}) that 
$$(M,u) \models [A_1 \leftarrow 0, \vec{W}' \leftarrow \vec{w}] 
(\neg  \psi_1 \land \overline{\varphi}[\vec{Y}/\vec{Y}^1] \land
(S=1)).$$  
Thus, $(M,u) \models [A_1 \leftarrow 0, \vec{W}' \leftarrow \vec{w}]
\neg \psi$, and AC2(b) does not hold.

If (b) or (c) hold, define an 
assignment $\nu$ to the variables in $\vec{Y}$ by taking 
$\nu(Y) = 1$ if $Y^1 \in \vec{W}$ and 
$\vec{w}(Y^1) = 1$ and $\nu(Y) = 0$ if $Y^0 \in \vec{W}$ and 
$\vec{w}(Y^0)=1$.
(As we observed above, exactly one of these two cases occurs, so $\nu$
is well defined.)
By assumption,   $\forall \vec{Y} \exists \vec{X} \varphi = \T$, so
there exists an assignment $\tau$ to 
$\vec{X}$ that makes  $\varphi$ true if the assignment to $\vec{Y}$ is
determined by $\nu$.

We again show that AC2(b) does not hold.  Let $Z' = \emptyset$ and 
let $\vec{W}^{\prime} = \vec{W} - \{ X: \tau(X) = 0 \}$.
Since $S \in 
\vec{W}'$ and $\vec{w}(S)=1$, it is easy to see that in both case (b) and (c), 
we have $(M,u) \models [A_1 \leftarrow 0, \vec{W}' \leftarrow
\vec{w}](A_1 \ne A_2)$.  (In case (b), this is becaause
$\vec{w}(A_2) = 1$; in case (c), this is because we have the equation
$A_2 = S$ and $\vec{w}(S) = 1$).  
The definition of $\vec{W}^{\prime}$ ensures that 
$$(M, u) \models [A_1 \leftarrow 0, \vec{W}^\prime \leftarrow
\vec{w}]\overline{\varphi}[\vec{Y}/\vec{Y}^1],$$
so that
$(M, u) \models [A_1 \leftarrow 0,\vec{W}^\prime \leftarrow
\vec{w}]\neg{\psi_3}$, and 
hence also
$(M,u) \models [A_1 \leftarrow 0,\vec{W}^\prime \leftarrow
\vec{w}]\neg{\psi}$, again showing that AC2(b) does not hold.
We conclude that AC3 holds for $\vec{A}$.

Now suppose that $(M,u,\psi,\vec{A},\vec{0})$ is in $\Lminbin$. We must
show that  
$\forall \vec{Y} \exists \vec{X} \varphi(\vec{X},\vec{Y}) = \T $.

Let $\nu$ be some assignment to $\vec{Y}$.
Let $\vec{W} = \{ S \} \cup \vec{X} \cup \vec{Y}^0 \cup \vec{Y}^1$ and define
$\vec{w}$ as follows:
\begin{itemize}
\item $\vec{w}(S) = 1$;
\item $\vec{w}(X) = 1$ for all $X \in \vec{X}$;
\item $\vec{w}(Y^{\nu(y)}) = 1$ and $\vec{w}(Y^{1 - \nu(y)}) = 0$ for
all $Y \in \vec{Y}$.
\end{itemize}

Since AC3 holds, $A_1 \leftarrow 0$ cannot be a cause of $\psi$ in
$(M,u)$ with witness $(\vec{W},\vec{w},1)$.  It is straightforward to
check that $(M, u) \models [A_1 \leftarrow 1,
\vec{W} \leftarrow \vec{w}]\neg\psi$, using the fact that
$\vec{w}(S) = 1$.
Hence, AC2(a) holds for $A_1 \leftarrow 0$.  AC3 holds trivially, and we have
already observed that A1 holds.  Thus, AC2(b) cannot hold for $A_2
\leftarrow 0$, that is, there exist 
$\vec{W}' \subseteq \vec{W}$ and $\vec{Z}' \subseteq \{A_2\}$ such that
$(M, u) \models [A_1 \leftarrow 0,
\vec{W}^{\prime} \leftarrow \vec{w}, \vec{Z}^\prime
\leftarrow \vec{z}^*]\neg{\psi}$. It follows that
\begin{itemize}
\item $S \in \vec{W}^{\prime}$ and $\vec{w}(S) = 1$; and
\item either $Y^0 \in \vec{W}'$ and 
$\vec{w}(Y^0) = 1$ or $Y^1 \in \vec{W}'$ and 
$\vec{w}(Y^1) = 1$, and it is not the case that both $Y^0$ and $Y^1$
are in $\vec{W}'$ and $\vec{w}(Y^0) = \vec{w}(Y^1)$.
\end{itemize}

Since $(M, u) \models [A_1 \leftarrow 0,
\vec{W}^{\prime} \leftarrow \vec{w}, \vec{Z}^\prime
\leftarrow \vec{z}^*]\psi_2$, it must be the case that 
$(M, u) \models [A_1 \leftarrow 0,
\vec{W}^{\prime} \leftarrow \vec{w}, \vec{Z}^\prime
\leftarrow \vec{z}^*]\neg \psi_3$.  This, in turn, implies that 
$(M, u) \models [A_1 \leftarrow 0,
\vec{W}^{\prime} \leftarrow \vec{w}, \vec{Z}^\prime
\leftarrow \vec{z}^*]\overline{\phi}[\vec{Y}/\vec{Y}^1]$.
Now 
$(M, u) \models [\vec{A}' \leftarrow \vec{a}', \vec{W}' \leftarrow
\vec{w}, \vec{Z}' \leftarrow \vec{z}^*]\psi_2$. Thus,
we must have 
$(M, u) \models [\vec{A}' \leftarrow \vec{a}', \vec{W}^{\prime}
\leftarrow \vec{w}, \vec{Z}' \leftarrow \vec{z}^*]\neg{\psi_3}$; thus,
$(M, u) \models [\vec{A}' \leftarrow \vec{a}', \vec{W}^{\prime} \leftarrow \vec{w}, \vec{Z}' \leftarrow \vec{z}^*]\overline{\varphi}[\vec{Y}/\vec{Y}^1]$.
Now define $\tau(X) = 1$ iff
$X \in \vec{W}^{\prime}$. It is immediate that $\tau$ satisfies
$\varphi$ if the values of $Y$ are assigned according to $\nu$.
It follows that 
$\forall \vec{Y} \exists \vec{X} \varphi(\vec{X},\vec{Y}) = \T $, as desired.

}


\paragraph{Acknowledgements:} 
Joseph Halpern was supported in part by NSF grants 
IIS-0911036 and  CCF-1214844, AFOSR grant FA9550-08-1-0438, ARO grant W911NF-14-1-0017, and
by the DoD 
Multidisciplinary University Research Initiative (MURI) program
administered by AFOSR under grant FA9550-12-1-0040.

\bibliographystyle{aaai}

\fullv{
\bibliography{joe,z,fv}
}
\shortv{

}

\end{document}